\newcites{appendix}{Appendix References}
\def\eqref#1{equation~\ref{#1}}
\def\1{\bm{1}}
\def\vs{{\bm{s}}}
\DeclareMathAlphabet{\mathsfit}{\encodingdefault}{\sfdefault}{m}{sl}
\SetMathAlphabet{\mathsfit}{bold}{\encodingdefault}{\sfdefault}{bx}{n}
\DeclareRobustCommand\onedot{\futurelet\@let@token\@onedot}
\def\@onedot{\ifx\@let@token.\else.\null\fi\xspace}
\def\eg{e.g\onedot} 
\def\ie{i.e\onedot}
 \def\vs{vs\onedot}
\newcommand{\linewidthbox}[1]{%
  \begingroup
    \sbox0{\ignorespaces#1\unskip}%
    \leavevmode
    \ifdim\wd0>\linewidth
      \hbox to\linewidth{%
        \hss\resizebox{\linewidth}{!}{\copy0 }\hss
      }%
    \else
      \copy0 %
    \fi
  \endgroup
}
\newcommand{%
  \tikzsetnextfilename{}%
  \input{.tikz}%
}[1]{%
  \tikzsetnextfilename{#1}%
  \input{#1.tikz}%
}
\crefname{section}{Sec.}{Sec.}
\crefname{appendix}{App.}{App.}
\crefname{definition}{Def.}{Defs.}
\crefname{proposition}{Prop.}{Props.}
\newtheorem{theorem}{Theorem}
\newtheorem{lemma}{Lemma}
\newcommand{\methodname}{Distributional Mixture of Experts}
\newcommand{\methodacronym}{DMoE}
\title{Distribution Learning for Molecular Regression}
\author{Nima Shoghi\textsuperscript{$\ast$1} \; Pooya Shoghi \; Anuroop Sriram\textsuperscript{2} \; Abhishek Das\textsuperscript{$\ast$}\\
   \\ {\textsuperscript{1}{\small{Georgia Institute of Technology}}}
   \\ {\textsuperscript{2}{\small{Fundamental AI Research (FAIR) at Meta}}}
   \\ {\textsuperscript{$\ast$}{\small{Work done while at FAIR}}}
   \\ Correspondence to: \texttt{nimash@gatech.edu}
}
\begin{document}
\newgeometry{
      textheight=9in,
      textwidth=5.5in,
      top=1in,
      headheight=12pt,
      headsep=25pt,
      footskip=30pt
}

\maketitle

\begin{abstract}
      Using ``soft'' targets to improve model performance has been shown to be effective in classification settings, but the usage of soft targets for regression is a much less studied topic in machine learning. The existing literature on the usage of soft targets for regression fails to properly assess the method's limitations, and empirical evaluation is quite limited. In this work, we assess the strengths and drawbacks of existing methods when applied to molecular property regression tasks. Our assessment outlines key biases present in existing methods and proposes methods to address them, evaluated through careful ablation studies. We leverage these insights to propose Distributional Mixture of Experts (DMoE): A model-independent, and data-independent method for regression which trains a model to predict probability distributions of its targets. Our proposed loss function combines the cross entropy between predicted and target distributions and the L1 distance between their expected values to produce a loss function that is robust to the outlined biases. We evaluate the performance of DMoE on different molecular property prediction datasets -- Open Catalyst (OC20), MD17, and QM9 -- across different backbone model architectures --  SchNet, GemNet, and Graphormer. Our results demonstrate that the proposed method is a promising alternative to classical regression for molecular property prediction tasks, showing improvements over baselines on all datasets and architectures.
\end{abstract}

\section{Introduction}
Graph Neural Networks (GNNs) have been shown to be quite successful at molecular property
prediction~\cite{duvenaud_convolutional_2015,gilmer_neural_2017,schutt2017schnet,klicpera2020directional}. In these methods, systems are typically represented as graphs with atoms as nodes,
and properties of interest as graph-level (\eg energy) or node-level (\eg atomic force) targets.
%

Orthogonal to the underlying GNN architecture, there is an abundance of methods
that have been proposed to improve model generalization.
Data augmentation methods -- ~\eg by adding Gaussian noise to atomic positions~\cite{godwin2021very} --
modify the input dataset to increase diversity and make the GNN robust to transformations
without additional annotated data.
Jointly training the GNN on auxiliary tasks (in addition to the main task of interest)
-- \eg, by adding an auxiliary position denoising loss~\cite{godwin2021very} --
has been shown to lead to better representations.
%
Regularization techniques -- ~\eg, penalizing overconfident predictions~\cite{Pereyra2017RegularizingNN}, randomly dropping nodes~\cite{do_dropnode} and edges~\cite{rongdropedge} in the input graph, and normalization layers~\cite{caigraphnorm,zhaopairnorm,limsgnorm,zhaogroupnorm}
-- add further constraints to improve generalization.

We focus on a parallel set of techniques that change the target representations.
In classification tasks, label smoothing~\cite{Szegedy_2016_CVPR} is one
such technique that modifies the target distribution to be a mixture of categorical
(\ie one-hot) and uniform.
In regression tasks -- which are more common for molecular prediction -- \citet{imani2018improving}
proposed histogram regression,
where the target scalars are converted to `soft' probability
distributions, and the model is trained to predict these distributions.

In this paper, we begin by conducting a thorough analysis of \citet{imani2018improving}'s
histogram regression and find that it is not
well-suited for molecular property prediction tasks.
We identify key biases in histogram regression and propose strategies to address them.
Finally, we combine these strategies to propose a model-independent and dataset-independent
technique that improves performance on a host of molecular property prediction
datasets -- Open Catalyst~\cite{ocp_dataset}, MD17~\cite{chmiela2017machine}, QM9~\cite{doi:10.1021/ci300415d, ramakrishnan2014quantum} --
across different backbone GNN architectures -- SchNet~\cite{schutt2017schnet},
GemNet~\cite{klicpera2021gemnet}, Graphormer~\cite{shi2022benchmarking}.

On the OC20 IS2RE dataset, our method shows an average of 8.4\% and 34.3\% relative improvement on energy MAE and energy within threshold metrics, respectively, across the validation ID and OOD adsorbate splits and an average of 4.6\% and 18.4\% relative improvement over all validation splits. On the QM9 dataset, our method shows an average improvement of 5\% in threshold accuracy. Finally, on the MD17 dataset, our method shows an average improvement of 0.17\% on energy MAEs.

We utilize our technique to create a variant of GemNet~\cite{klicpera2021gemnet} that achieves competitive results to the current state of the art on the OC20 dataset's direct IS2RE prediction task.
%


\section{Histogram Regression}
\label{sec:historeg}
\begin{figure*}[h]
      \includegraphics[width=1.0\textwidth]{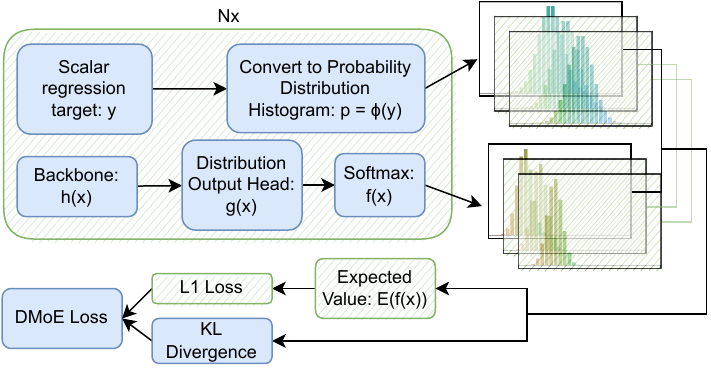}
      \caption{Overview of histogram regression. DMoE additions are highlighted in green.}
      \label{fig:concept:histogram_regression}
\end{figure*}

For bounded regression problems, histogram regression~\cite{imani2018improving}, illustrated in \cref{fig:concept:histogram_regression}, is a simple and effective technique for increasing performance.
Assume we have a scalar target \(y \in [y_{min}, y_{max}]\) and a backbone model \(h(x)\). In histogram regression, instead of training a model to directly predict \(y\), we train a model to predict a target probability distribution that we construct from \(y\). Traditionally, this constructed target distribution is a Gaussian distribution with a mean of \(y\) and with a fixed variance, \(\sigma^2\), set as a hyperparameter.
This target distribution is then discretized to a histogram representation with a fixed number of bins, \(N\), where each bin has a fixed width, \(w\), and the bin boundaries are set as \(\vec{b}_i = i \times w + y_{min}\) for \(i = 1, 2, \ldots, N\). The mass at each bin, \(p_i\), is equal to \(P(\vec{b}_i + w) - P(\vec{b}_i)\), where \(P(x)\) is the CDF of the Gaussian distribution.

We then attach a histogram output head, \(g(x)\) to the backbone model and take the softmax of the histogram output head, giving us the final model output, \(\hat{Y} = f(x)\).
\begin{align}
      f(x)               & = \text{softmax}(g(x)) \\
      \text{where } g(x) & = \text{MLP}(h(x))
\end{align}

The loss function -- called the histogram loss -- is then the cross entropy between the model's predicted distribution and the induced distribution.
\begin{equation}
      L_{HL}(\hat{Y}, Y) = -\sum_i{Y_i \log(\hat{Y}_i)}
\end{equation}

\citet{imani2018improving} provide theoretical justification for the
improved performance of histogram regression over traditional regression.
By comparing the norm of the loss gradient for histogram loss~\vs a mean
squared error loss, they show that histogram losses produce smoother, more
stable gradients.
We refer the reader to Sec. 3 in~\citet{imani2018improving} for a detailed exposition on this.

However, existing theoretical results are insufficient at explaining the
observed strengths and drawbacks of histogram regression.
%
%
Through careful empirical evaluation, we identify two primary biases that
are inherent in histogram regression -- 1) Distribution Quantization Error and 2) Histogram Distance Bias. Additional biases are explored in \cref{appendix:biases}.

\subsection{Distribution Quantization Error}
\label{sec:distribution-quantization-error}

\begin{figure*}[h]
      \includegraphics[width=1.0\textwidth]{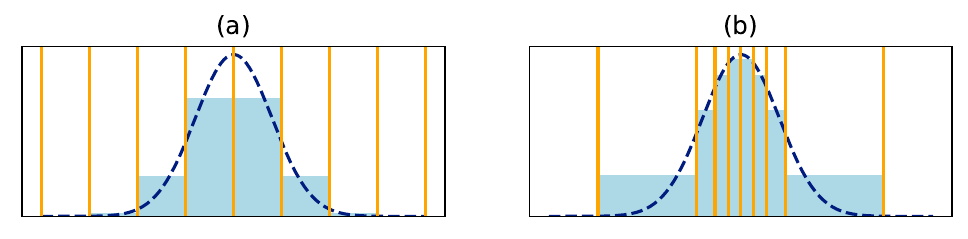}
      \caption{For a normally distributed regression target, notice how traditional uniform histograms, shown in (a), yields a much lower precision (and thus a higher error) than the histogram with normally distributed bins, shown in (b). Using \cref{eq:quantization-error}, we compute the quantization error of (a) to be \(6.07\) and (b) to be \(2.53\).}
      \label{fig:method:normal-bin-dist}
\end{figure*}

Distribution quantization error refers to the unavoidable error that comes from utilizing a discrete
distributional representation for regression targets.
In other words, this is quantization error of going from the induced target
probability histogram to a scalar-valued target.

We empirically demonstrate this error by utilizing a threshold accuracy metric.
A threshold accuracy metric is one that measures the percentage of predictions
that are within a certain threshold of the true target value, rather than the
absolute value of the difference between the prediction and the true target value.
Intuitively, quantization error or error due to coarse resolution will contribute
towards an absolute error metric but not towards binary threshold metrics (as long
as predictions are within the threshold).

To address this error:

\begin{itemize}
      \item We allow the probability distribution histogram to be non-linear. In other words, we allow the histogram bins to be non-equidistant and thus have different widths. This allows us to increase the resolution of our histograms (by using smaller bins) in intervals with higher densities of values (see \cref{fig:method:normal-bin-dist}). \Cref{hist:qerror:bin-dist} describes, in more detail, how this is achieved.
      \item Instead of a single output head, we use a series of output heads,
            where each head's target histogram uses slightly shifted bin
            endpoints. Note that simply using multiple output heads -- all with the
            same target histogram (\ie the same bin endpoints) -- is still susceptible
            to quantization error, so using shifted bin endpoints across output
            heads is essential. The multi-histogram loss value is then the (optionally weighted) mean of the loss for each individual histogram. The optimized implementation of this procedure is described in \cref{appendix:optimized-multihist}.
\end{itemize}

\subsubsection{Histogram Bin Distribution}
\label{hist:qerror:bin-dist}
Our technique allows us to adjust the bin distribution. We choose, as a hyperparameter, the histogram bin distribution to use, \(B\). For a histogram of \(N\) bins, we use the quantile function of \(B\), \(Q_B\), to construct a histogram with \(N\) equally probable bins under the distribution \(B\).

\begin{equation}
      \label{eqn:bin-dist}
      \vec{b} = \left[ 0+\epsilon, Q_B(0), Q_B(1/N), Q_B(2/N), \dots, Q_B(1-1/N), 1-\epsilon \right]
\end{equation}

where \(\epsilon\) is a small constant.

For our experimental results, we found that normally-distributed histogram bins yielded the best results across all evaluated datasets. Ablation studies comparing the performance of uniformly-distributed endpoints with normally-distributed endpoints can be found in \cref{sec:ablation:bin-distribution}.

\subsection{Histogram Distance Bias}
\label{sec:motivation:histogram-distance-bias}

\begin{figure}[h]
      \includegraphics[width=1.0\textwidth]{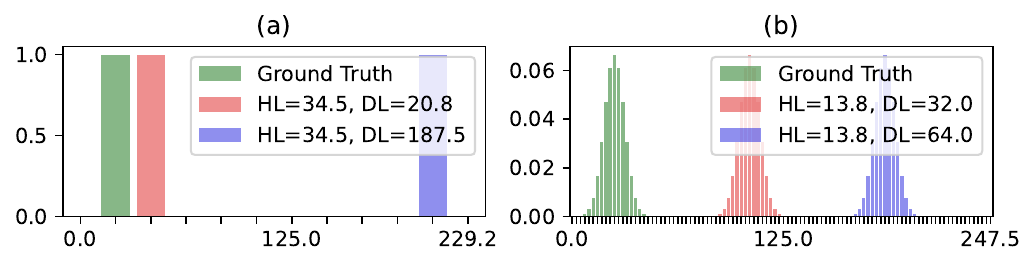}
      \caption{The model's output is represented by the green histograms, and the red and blue histograms represent two sample model predictions. The histogram loss value (HL) and the distance-based loss metric (DL) are shown in the legend box. Note that in (a), the red histogram is much closer to the ground truth than the blue histogram, but their HL values are equal. The distance-based metric fixes this. (b) demonstrates how this phenomenon is present even if we induce a Gaussian distribution.}
      \label{fig:motivation:metric-bias}
\end{figure}

This refers to the error introduced by using cross entropy as our histogram loss function.
Cross entropy treats every bin's probability independently, and thus, cannot distinguish
the magnitude or `distance' of errors in the histogram's prediction (see \cref{fig:motivation:metric-bias} (a)).
Existing implementations of histogram loss~\cite{imani2018improving} utilize an
induced Gaussian representation for the probability histogram to avoid this bias,
but as demonstrated in \cref{fig:motivation:metric-bias} (b), this technique does
not fully address this bias.

To fix this error, we use an additional distance-based term in our overall loss.
This loss computes the $\ell_1$ distance between the expected value of the model's
predicted histogram and the true scalar.

\begin{equation}
      L_{DL}(\hat{Y}, y) = \|y - \mathbb{E}(\hat{Y})\|_1
\end{equation}

Our final loss, then, is the combination of the histogram loss and the distance-based loss.
\begin{equation}
      \label{eq:final-loss}
      L_{\methodacronym{}}(\hat{Y}, y) = \alpha_{HL} L_{HL}(\hat{Y}, \Phi(y)) + \alpha_{DL} L_{DL}(\hat{Y}, y)
\end{equation}
where \(\alpha_{HL}\) and \(\alpha_{DL}\) are the coefficients of the distributional and distance-based losses, respectively. These coefficients are chosen as hyperparameters, and their values can be adjusted during training according to a pre-defined schedule (see \cref{sec:ablation:loss-coefficients} for ablation studies).

\section{Analysis}

\subsection{Stable Gradients}
We prove that \methodacronym{} produces more stable gradients by comparing the bounds of \methodacronym{} gradient norm with those of the MSE loss gradient norm. We use same notation as in \cref{sec:historeg}.



\begin{theorem}
      Assume that \(g(x)\) is locally \(l\)-Lipschitz continuous w.r.t the model's parameters, \(\theta\):
      \begin{equation}
            \left\| \frac{\partial g(x)}{\partial \theta} \right\| \leq l
      \end{equation}
      Then, the norm of the gradient of \methodacronym{} loss w.r.t. \(\theta\) is bounded by:
      \begin{equation}
            \left\| \nabla_\theta L_{\methodacronym{}}(f(x), y) \right\|
            \leq
            l~\left\|\vec{p} - f(x)\right\|
            \left[
            1+
            \sqrt{2} ~
            {\left\|f(x)\right\|}_2
            \left\|\vec{b}\right\|
            \right]
      \end{equation}
\end{theorem}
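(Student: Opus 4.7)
The plan is to push the gradient through the backbone via the Lipschitz hypothesis, and then bound each of the two summands of $L_{\methodacronym{}}$ separately in the pre-softmax coordinate $g$. By the chain rule,
\[
      \nabla_\theta L_{\methodacronym{}}(f(x),y) = \left(\frac{\partial g(x)}{\partial \theta}\right)^{\!\top}\!\nabla_g L_{\methodacronym{}}(f(x),y),
\]
so the hypothesis $\|\partial g(x)/\partial\theta\|\leq l$ combined with the triangle inequality (taking $\alpha_{HL}=\alpha_{DL}=1$) gives
\[
      \|\nabla_\theta L_{\methodacronym{}}\| \leq l\bigl(\|\nabla_g L_{HL}\| + \|\nabla_g L_{DL}\|\bigr).
\]

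For the histogram-loss term I would invoke the standard softmax-cross-entropy identity $\nabla_g L_{HL}(f(x),\vec{p}) = f(x)-\vec{p}$, which immediately yields $\|\nabla_g L_{HL}\| = \|\vec{p}-f(x)\|$. This reproduces the leading $l\,\|\vec{p}-f(x)\|$ term of the claimed bound.

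The remaining task---and the main obstacle---is to show $\|\nabla_g L_{DL}\|\leq\sqrt{2}\,\|\vec{p}-f(x)\|\,\|f(x)\|_2\,\|\vec{b}\|$. Writing $\mu = f(x)^{\top}\vec{b}$ and $s=\mathrm{sign}(y-\mu)$, differentiating $L_{DL}=|y-\mu|$ through the softmax Jacobian $J = \mathrm{diag}(f(x))-f(x)f(x)^{\top}$ gives
\[
      \nabla_g L_{DL} = -s\,J\vec{b} = -s\,\bigl(f(x)\odot\vec{b}-\mu\,f(x)\bigr).
\]
The key observation I would exploit is that, since $\vec{p}$ is constructed as the discretization of $y$ (so that $\vec{p}^{\top}\vec{b}=y$), one has $\mu - y = (f(x)-\vec{p})^{\top}\vec{b}$, and Cauchy-Schwarz gives $|\mu-y|\leq\|\vec{p}-f(x)\|\,\|\vec{b}\|$. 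Combining this with a Cauchy-Schwarz bound on $\sum_k f_k(x)^2(b_k-\mu)^2$ (using the splitting $b_k-\mu=(b_k-y)+(y-\mu)$ and $\|f(x)\|_\infty\leq\|f(x)\|_2$) should produce the desired factor $\sqrt{2}\,\|f(x)\|_2\,\|\vec{b}\|$ attached to $\|\vec{p}-f(x)\|$.

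Assembling the HL and DL pieces and factoring out the common $\|\vec{p}-f(x)\|$ will then yield the stated inequality after multiplying by $l$. The hardest part is controlling the distance-loss gradient: absent structure, $\|\nabla_g L_{DL}\|$ is only naturally bounded by quantities of the form $\|f(x)\|_2\|\vec{b}\|$, so pulling out an extra $\|\vec{p}-f(x)\|$ factor relies crucially on the compatibility of $\vec{p}$ with $y$ via $\vec{p}^{\top}\vec{b}=y$ and the careful application of Cauchy-Schwarz that introduces the $\sqrt{2}$.
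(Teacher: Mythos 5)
Your decomposition into the HL and DL terms, the chain rule through $g$, and the softmax--cross-entropy identity $\nabla_g L_{HL}=f(x)-\vec{p}$ all match the paper's Lemma~1 and its final assembly step. The gap is in the DL term, exactly where you flag ``the main obstacle,'' and your proposed fix does not close it. You correctly compute $\nabla_g L_{DL}=-s\,J\vec{b}$ with $J=\mathrm{diag}(f(x))-f(x)f(x)^{\top}$; but this vector depends only on $f(x)$ and $\vec{b}$ and does not vanish as $f(x)\to\vec{p}$, whereas the target bound $\sqrt{2}\,\|\vec{p}-f(x)\|\,\|f(x)\|_2\,\|\vec{b}\|$ does. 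Concretely, your splitting $b_k-\mu=(b_k-y)+(y-\mu)$ gives $\|J\vec{b}\|\le\|f(x)\odot(\vec{b}-y\mathbf{1})\|+|y-\mu|\,\|f(x)\|$; the second summand picks up $\|\vec{p}-f(x)\|\,\|\vec{b}\|$ via your Cauchy--Schwarz observation, but the first summand is of order $\|f(x)\|_2\,\|\vec{b}\|$ with no $\|\vec{p}-f(x)\|$ factor and is generically nonzero even at $f(x)=\vec{p}$. No application of Cauchy--Schwarz to $\|J\vec{b}\|$ can manufacture the missing factor, so your route cannot reach the stated inequality for the exact $\ell_1$ gradient.

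The paper's Lemma~2 obtains the factor by a different (and looser) manoeuvre: it replaces $y$ by $\vec{p}\cdot\vec{b}$ in the distance term and writes its gradient in the product form $\frac{\partial g}{\partial\theta}\,\frac{\partial f}{\partial g}\,\frac{\partial \ell_1}{\partial f}\,\bigl(\vec{p}\cdot\vec{b}-f(x)\cdot\vec{b}\bigr)$, i.e.\ a sign vector times the Jacobian chain times the scalar residual. Taking norms then combines the Frobenius bound $\|\partial f/\partial g\|\le\sqrt{2}\,\|f(x)\|_2$ on the softmax Jacobian (which is where your $\sqrt{2}\,\|f(x)\|_2$ would have to come from) with $|(\vec{p}-f(x))\cdot\vec{b}|\le\|\vec{p}-f(x)\|\,\|\vec{b}\|$, so the residual scalar supplies the $\|\vec{p}-f(x)\|$ factor. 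That product form is the gradient structure of a squared residual rather than of $|r|$ itself (the true $\ell_1$ gradient w.r.t.\ $f$ is $-\mathrm{sign}(r)\,\vec{b}$, with no residual factor), so your exact computation is in fact the more faithful one --- but it is precisely the faithful computation that does not satisfy the claimed bound. To match the paper you would need to adopt its residual-times-Jacobian form (or a squared or smoothed distance loss); as written, your sketch has a genuine unclosed gap.
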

The proof for this theorem can be found in \cref{appendix:stablegrad:proof}.
We make the following key observations from the gradient norm bound:
\begin{itemize}
      \item The quantity \(\left\|\vec{p} - f(x)\right\|\) has an upper bound of 1. In practice, it should be very small as both \(\vec{p}\) and \(f(x)\) are probability vectors, and, as the model's predictions get closer to the ground truth, this quantity approaches zero.
      \item The quantity \({\left\|f^2(x)\right\|}_2\) is \(\ell_2\)-norm of the model's output probability vector. It is always less than 1, and it heavily depends on our choice of induced probability distribution. The value of this quantity is maximized when the model predicts a categorical (i.e., one-hot) distribution, and is minimized when the model predicts a uniform distribution. A Gaussian-distributed histogram also produces a very small \(\ell_2\)-norm value (see \cref{fig:appendix_distributions_l2}). This is consistent with our experimental observations (see \cref{sec:induced-distribution-ablation}), which show that Gaussian induced distributions are more performant than categorical induced distributions.
      \item The quantity \(\left\|\vec{b}\right\|\) restricts the distance-based portion of the loss to only consider the range that is covered by the histogram. This puts a lower, more restrictive bound on the gradient norm than MSE, which has no such restriction.
\end{itemize}

\Citet{hardt2016train} suggests that lower gradient bounds yield lower generalization errors. We compare the bounds for the \methodacronym{} loss and the MSE loss, which has a gradient norm of \(l \left| f(x) - y \right|\). Using the observations above, we show that the \methodacronym{} loss has a lower gradient norm bound. This is because the gradient norm of MSE is unbounded --- \(f(x)\) could produce any value in \(\mathbb{R}\).For \methodacronym{}, however, the norm of the histogram component is --- at most --- bounded by \(l \left[1 + \sqrt{2} \left(y_{max} - y_{min}\right)\right]\). In reality, however, the values of \(\left\|\vec{p} - f(x)\right\|\) and \({\left\|f^2(x)\right\|}_2\) are much lower. Using the \({\left\|f^2(x)\right\|}_2\) value for the Gaussian distribution in \cref{fig:appendix_distributions_l2}, our gradient norm bound would be \(l \left[1 + \frac{\sqrt{2}}{200} \left(y_{max} - y_{min}\right)\right]\).


\subsection{Uncertainty Quantification}
One natural consequence of predicting probability distributions is that we have a natural way to quantify uncertainty in our model's predictions.
In the single histogram case, this is trivially achieved by computing the entropy of the predicted histogram: \(H(\hat{Y}) = -\sum_k{\hat{Y}_k \log(\hat{Y}_k)}\).

For multiple output heads, we have access to two separate natural uncertainty quantifiers: (1) the mean entropy across all output heads, and (2) the mean KL-divergence across all pairs of output heads. The former is very similar to the single histogram case and is easy to compute.
The latter, however, is more difficult to compute. This is because the KL-divergence requires the two histograms to be defined on the same support, but, by definition, this is not always the case in our method. Therefore, we use a trick to compute the KL-divergence: We pick one predicted histogram, \(\hat{Y}^{(0)}\) and linearly interpolate all predicted histograms to \(\hat{Y}^{(0)}\)'s support. We can then easily compute the KL-divergence between the interpolated histograms: \(H(\hat{Y}^{(i)}, \hat{Y}^{(j)}) = -\sum_k{\tau(\hat{Y}^{(i)}_k) \log{\left(\frac{\tau{\left(\hat{Y}^{(j)}_k\right)}}{\tau{\left(\hat{Y}^{(i)}_k\right)}}\right)}}\), where \(\tau(Y)\) is our interpolation function. Our uncertainty score, then, is the maximum KL divergence between any two histograms.
\begin{equation}
      U(\hat{Y}) = \max_{i,j} \left[H(\hat{Y}^{(i)}, \hat{Y}^{(j)})\right]
\end{equation}

Finally, we must note that both these uncertainty entropy values are in nats and thus must be scaled before being useful. In our method, we use a simple technique, where we use 10\% of the validation to compute optimal shifting and scaling parameters for the uncertainty scores. The final uncertainty score is then:

\begin{equation}
      U_{\text{adjusted}}(\hat{Y}) = \gamma U(\hat{Y}) + \delta
\end{equation}

where \(\gamma, \delta \in \mathbb{R} \) are the scaling and shifting parameters.

\section{Evaluation}
In this section, we will evaluate models trained using \methodacronym{} loss with their baseline (i.e., models trained on L1 loss) performance. For all experiments, we will report the MAE of the targets of interest.
Whenever applicable, we will also report threshold accuracies (WT), which is the percentage of targets that are correctly predicted within a given margin of error. For all experiments, this threshold is set at 0.1\% of the maximum possible error (i.e., \(0.001|y_{max} - y_{min}|\)). Information on the training environment and the amount of compute used during training can be found in \cref{appendix:compute-resources}.


\subsection{Open Catalyst 2020: Relaxed Energy Prediction}
The OC20 dataset~\cite{ocp_dataset}
is one of the largest publicly available catalysis datasets, containing $1.2M$
relaxations of adsorbate-catalyst structures. For our experiments, we evaluate
our technique on the Initial Structure to Relaxed Energy (IS2RE) task.
This is a graph-level regression task in which the model has to estimate the
relaxed energy of an adsorbate-catalyst system from an initial state.
The training dataset comprises of $466k$ initial unrelaxed adsorbate-catalyst
structures paired with relaxed energies.


We evaluate our method with four different backbone GNNs -- SchNet \cite{schutt2017schnet},
DimeNet++~\cite{klicpera2020fast,klicpera2020directional}, GemNet-dT~\cite{klicpera2021gemnet},
and 3D Graphormer~\cite{ying2021transformers,shi2022benchmarking}.
%
%
\Cref{tab:oc20-val} shows validation results for all backbone models when
trained on the full OC20 IS2RE dataset.
For the GemNet and 3D Graphormer models, we evaluate these methods with and
without the relaxed position prediction auxiliary task from \citet{shi2022benchmarking}.
Our results demonstrate some very clear trends.
First, across nearly all backbone models, using \methodacronym{} shows substantial
improvements over the baseline, HL and DL variants.
For 3D Graphormer, the baseline performs about the same as DMoE in energy MAE,
but DMoE performs significantly better on the Energy within Threshold (EwT) metric.
Second, the addition of the relaxed position prediction auxiliary task improves
performance across GemNet-dT and 3D Graphormer, but it does not change
the observed trends on EwT with \methodacronym{}.
Third, \methodacronym{} is not effective on the out-of-distribution adsorbate (OOD Ads)
and OOD Both splits. It neither hurts nor improves performance. We discuss this further
in \cref{sec:eval:effectiveness}.

\begin{table}[!htp]\centering
    \caption{Accuracy scores across the different validation splits in the OC20 IS2RE dataset. For each model, we evaluate using the baseline L1 loss, the histogram loss (HL-only), the distance-based loss (DL-only), and the \methodname{} Loss (\methodacronym{}). Energy MAEs are reported in eV.}\label{tab:oc20-val}
    \scriptsize
    \begin{tabular}{lrrrrrrrrrr}\toprule
                                       &                & \multicolumn{2}{c}{ID} & \multicolumn{2}{c}{OOD Ads} & \multicolumn{2}{c}{OOD Cat} & \multicolumn{2}{c}{OOD Both}                                                                     \\\cmidrule{3-10}
                                       &                & MAE                 & EwT                         & MAE                      & EwT                          & MAE         & EwT            & MAE         & EwT            \\\midrule
        \multirow{4}{*}{SchNet}        & Baseline       & 0.954                  & 1.9\%                       & 1.045                       & 1.5\%                        & 0.925          & 1.8\%          & 0.964          & 1.5\%
        \\
                                       & HL-only        & 0.713                  & 3.0\%                       & 0.910                       & 1.7\%                        & 0.706          & 3.0\%          & 0.822          & 1.8\%
        \\
                                       & DL-only        & 1.780                  & 0.8\%                       & 1.701                       & 0.9\%                        & 1.788          & 0.8\%          & 1.586          & 0.9\%
        \\
                                       & DMoE           & 0.702                  & 3.0\%                       & 0.914                       & 1.7\%                        & 0.695          & 3.3\%          & 0.820          & 1.8\%
        \\
        \hline
        \multirow{4}{*}{DimeNet++}     & Baseline       & 0.675                  & 3.1\%                       & 0.774                       & 2.1\%                        & 0.665          & 3.1\%          & 0.720          & 2.2\%
        \\
                                       & HL-only        & 0.671                  & 3.3\%                       & 0.901                       & 1.9\%                        & 0.647          & 3.8\%          & 0.806          & 2.1\%
        \\
                                       & DL-only        & 1.750                  & 0.8\%                       & 1.863                       & 0.7\%                        & 1.746          & 0.9\%          & 1.696          & 0.8\%
        \\
                                       & DMoE           & 0.652                  & 3.6\%                       & 0.790                       & 2.1\%                        & 0.635          & 4.0\%          & 0.725          & 2.2\%
        \\
        \hline
        \multirow{6}{*}{GemNet-dT}     & Baseline       & 0.610                  & 3.6\%                       & 0.795                       & 2.2\%                        & 0.626          & 3.5\%          & 0.770          & 2.2\%
        \\
                                       & HL-only        & 0.571                  & 5.0\%                       & 0.821                       & 2.0\%                        & 0.579          & 5.2\%          & 0.738          & 2.3\%
        \\
                                       & DL-only        & 2.253                  & 0.8\%                       & 1.940                       & 0.7\%                        & 2.208          & 0.7\%          & 1.674          & 0.9\%
        \\
                                       & DMoE           & 0.557                  & 5.0\%                       & 0.752                       & 2.4\%                        & 0.560          & 4.9\%          & 0.676          & 2.5\%
        \\
                                       & Baseline + Pos & 0.476                  & 5.7\%                       & 0.563                       & 3.5\%                        & 0.482          & 6.0\%          & 0.506          & 3.6\%          \\&DMoE + Pos &0.450 &7.8\% &0.574 &3.5\% &0.459 &7.4\% &0.513 &3.6\%
        \\
        \hline
        \multirow{6}{*}{3D Graphormer} & Baseline       & 0.528                  & 4.9\%                       & 0.705                       & 2.4\%                        & 0.531          & 4.7\%          & 0.645          & 2.7\%
        \\
                                       & HL-only        & 0.560                  & 5.8\%                       & 0.824                       & 2.1\%                        & 0.567          & 5.7\%          & 0.735          & 2.3\%
        \\
                                       & DL-only        & 0.529                  & 5.3\%                       & 0.709                       & 2.6\%                        & 0.531          & 5.2\%          & 0.632          & 2.5\%
        \\
                                       & DMoE           & 0.537                  & 6.0\%                       & 0.785                       & 2.3\%                        & 0.545          & 5.8\%          & 0.709          & 2.5\%
        \\
                                       & Baseline + Pos & 0.451                  & 6.5\%                       & 0.613                       & 2.8\%                        & 0.466          & 6.5\%          & 0.560          & 3.0\%
        \\
                                       & DMoE + Pos     & 0.457                  & 7.3\%                       & 0.712                       & 2.8\%                        & 0.469          & 7.0\%          & 0.623          & 3.2\%
        \\
        \hline
        GemNet*                        & DMoE + Pos     & \textbf{0.389}         & \textbf{10.1\%}             & \textbf{0.544}              & \textbf{4.9\%}               & \textbf{0.395} & \textbf{9.3\%} & \textbf{0.486} & \textbf{5.4\%}
        \\

        \bottomrule
    \end{tabular}
\end{table}

To fully demonstrate the effectiveness of \methodacronym{}, we design a variant of the GemNet model, which we will refer to as GemNet*, with the following changes: 1) We increase the number of blocks to 12 and repeat these blocks 2 times. 2) We use layer norm before each interaction block. 3) Instead of using one output block for each interaction block, we use a single output block at the end of the model to output the energy probability distribution. 4) Finally, we apply Gaussian augmentation to the atoms' positions.
We evaluate this model on the OC20 IS2RE test set.

\Cref{tab:oc20-test} shows results for some of the current state-of-the-art models, as well as other baselines, for the OC20 IS2RE test set.
Similar to the previous results, we see that the \methodacronym{} model performs
significantly well on the ID and OOD Cat splits,
but its performance drops in the OOD Ads and OOD Both splits.

\begin{table}[!htp]\centering
    \caption{OC20 IS2RE test results. Energy MAEs are reported in eV.}\label{tab:oc20-test}
    \scriptsize
    \begin{tabular}{lrrrrrrrrr}\toprule
                                                                               & \multicolumn{2}{c}{ID} & \multicolumn{2}{c}{OOD Ads} & \multicolumn{2}{c}{OOD Cat} & \multicolumn{2}{c}{OOD Both}                                                                       \\\cmidrule{2-9}
                                                                               & MAE                    & EwT                         & MAE                         & EwT                          & MAE            & EwT             & MAE            & EwT             \\\midrule
        GemNet-OC (Relaxation) \cite{gasteiger2022graph}                       & \textbf{0.331}         & \textbf{18.4\%}             & \textbf{0.336}              & \textbf{15.2\%}              & \textbf{0.379} & \textbf{14.2\%} & 0.344          & \textbf{12.1\%}
        \\
        GemNet-XL (Relaxation) \cite{sriram2022towards}                        & 0.376                  & 13.3\%                      & 0.368                       & 10.0\%                       & 0.402          & 11.6\%          & \textbf{0.338} & 9.7\%
        \\
        GemNet-T (Relaxation) \cite{klicpera2021gemnet}                        & 0.390                  & 12.4\%                      & 0.391                       & 9.1\%                        & 0.434          & 10.1\%          & 0.384          & 7.9\%
        \\
        SpinConv (Relaxation) \cite{shuaibi2021rotation}                       & 0.421                  & 9.4\%                       & 0.438                       & 7.5\%                        & 0.458          & 8.2\%           & 0.420          & 6.6\%
        \\
        DimeNet++ (Relaxation) \cite{klicpera2020directional,klicpera2020fast} & 0.503                  & 6.6\%                       & 0.543                       & 4.3\%                        & 0.579          & 5.1\%           & 0.611          & 3.9\%
        \\
        \hline
        CGCNN (Direct) \cite{xie2018crystal}                                   & 0.615                  & 3.4\%                       & 0.916                       & 1.9\%                        & 0.622          & 3.1\%           & 0.851          & 2.0\%
        \\
        SchNet (Direct) \cite{schutt2017schnet}                                & 0.639                  & 3.0\%                       & 0.734                       & 2.3\%                        & 0.662          & 2.9\%           & 0.704          & 2.2\%
        \\
        NequIP (Direct) \cite{batzner2021se}                                   & 0.602                  & 3.2\%                       & 0.784                       & 2.2\%                        & 0.619          & 3.1\%           & 0.736          & 2.1\%
        \\
        PaiNN~ (Direct) \cite{schutt2021equivariant}                           & 0.575                  & 3.5\%                       & 0.783                       & 2.0\%                        & 0.604          & 3.5\%           & 0.743          & 2.3\%
        \\
        DimeNet++ (Direct) \cite{klicpera2020directional,klicpera2020fast}     & 0.562                  & 4.3\%                       & 0.725                       & 2.1\%                        & 0.576          & 4.1\%           & 0.661          & 2.4\%
        \\
        SphereNet (Direct) \cite{liu2021spherical}                             & 0.563                  & 4.5\%                       & 0.703                       & 2.3\%                        & 0.571          & 4.1\%           & 0.638          & 2.4\%
        \\
        SEGNN (Direct) \cite{brandstetter2021geometric}                        & 0.533                  & 5.4\%                       & 0.692                       & 2.5\%                        & 0.537          & 4.9\%           & 0.679          & 2.6\%
        \\
        Noisy Nodes (Direct) \cite{godwin2021very}                             & 0.422                  & 9.1\%                       & \textbf{0.568}              & \textbf{4.3\%}               & 0.437          & 8.0\%           & \textbf{0.465} & \textbf{4.6\%}
        \\
        3D Graphormer (Direct) \cite{ying2021transformers,shi2022benchmarking} & 0.398                  & 9.0\%                       & 0.572                       & 3.5\%                        & 0.417          & 8.2\%           & 0.503          & 3.8\%
        \\
        GemNet*+DMoE (Direct, Ours)                                            & \textbf{0.390}         & \textbf{10.1\%}             & 0.640                       & 3.6\%                        & \textbf{0.401} & \textbf{8.9\%}  & 0.576          & 3.9\%
        \\
        \bottomrule
    \end{tabular}

\end{table}

\subsection{MD17: Molecular Dynamics}
MD17~\cite{chmiela2017machine} contains energies and forces for molecular dynamics trajectories of eight organic molecules. The task is two-fold: (1) predicting the energy of a molecular system, and (2) predicting the force of each atom in the system. For our experiments, we demonstrate results on the 1k and 50k splits of the MD17 dataset. However, while the 50k split provides much more training data, it does not guarantee independent samples in the test set and is therefore much less reliable.

We evaluate DMoE using two separate backbone GNNs, one for each method of estimating forces --
SchNet with forces calculated as negative gradient of the energy w.r.t atom positions, and
GemNet-dT with the direct-force output head.
For each, we report the energy and force MAE values for the baseline (trained on L1 loss) with the \methodacronym{} method applied to the energy outputs.

\begin{table}[!htp]\centering
    \caption{Validation scores across all different molecules in the 1k and 50k splits of the MD17 dataset. Energy MAEs are reported in meV, and force MAEs are reported in meV/\si{\angstrom}.}\label{tab:md17}
    \scriptsize
    \begin{tabular}{lrrrrrrrrr}\toprule
                            & \multicolumn{4}{c}{GemNet-dT} & \multicolumn{4}{c}{SchNet}                                                                                                                                       \\\cmidrule{2-9}
                            & \multicolumn{2}{c}{Baseline}  & \multicolumn{2}{c}{\methodacronym{}} & \multicolumn{2}{c}{Baseline} & \multicolumn{2}{c}{\methodacronym{}}                                                           \\\cmidrule{2-9}
                            & Energy                        & Forces                             & Energy                       & Forces                             & Energy & Forces        & Energy         & Forces        \\\midrule
        Aspirin (1k)        & \textbf{105.3}                & 45.6                               & 109.9                        & \textbf{45.5}                      & 152.8  & \textbf{72.2} & \textbf{149.7} & 79.8          \\
        Benzene (1k)        & 27.9                          & \textbf{14.0}                      & \textbf{21.9}                & 15.3                               & 70.5   & \textbf{18.3} & \textbf{70.4}  & 18.7          \\
        Ethanol (1k)        & \textbf{19.8}                 & \textbf{23.9}                      & 24.1                         & 25.1                               & 116.6  & \textbf{38.3} & \textbf{116.5} & 42.0          \\
        Malonaldehyde (1k)  & \textbf{52.2}                 & \textbf{42.8}                      & 56.1                         & 44.1                               & 113.2  & \textbf{62.0} & \textbf{112.7} & 64.0          \\
        Naphthalene (1k)    & 59.6                          & \textbf{23.3}                      & \textbf{54.7}                & 30.9                               & 146.4  & \textbf{39.5} & \textbf{146.2} & 42.9          \\
        Salicylic (1k)      & \textbf{75.3}                 & \textbf{39.8}                      & 83.4                         & 41.4                               & 144.7  & 61.9          & \textbf{143.6} & \textbf{60.8} \\
        Toluene (1k)        & 58.8                          & \textbf{25.2}                      & \textbf{56.8}                & 32.6                               & 132.0  & \textbf{38.9} & \textbf{131.4} & 42.6          \\
        Uracil (1k)         & \textbf{54.9}                 & \textbf{36.6}                      & 60.7                         & 39.5                               & 127.7  & \textbf{56.2} & \textbf{127.3} & 57.4          \\
        \hline
        Aspirin (50k)       & 6.9                           & 3.3                                & \textbf{5.0}                 & \textbf{2.9}                       & 151.8  & \textbf{24.2} & \textbf{150.7} & 26.0          \\
        Benzene (50k)       & \textbf{1.3}                  & \textbf{6.0}                       & 1.4                          & 6.3                                & 74.3   & \textbf{11.1} & \textbf{74.0}  & 14.2          \\
        Ethanol (50k)       & \textbf{0.6}                  & \textbf{0.9}                       & 1.0                          & 1.3                                & 119.9  & \textbf{7.0}  & \textbf{119.4} & 9.9           \\
        Malonaldehyde (50k) & \textbf{1.0}                  & \textbf{1.2}                       & 1.5                          & 2.2                                & 119.7  & \textbf{13.0} & \textbf{119.3} & 16.2          \\
        Naphthalene (50k)   & 1.8                           & \textbf{1.2}                       & \textbf{1.7}                 & 1.3                                & 142.9  & \textbf{12.8} & \textbf{142.1} & 15.7          \\
        Salicylic (50k)     & 3.8                           & \textbf{2.4}                       & \textbf{3.2}                 & 2.6                                & 140.5  & \textbf{18.7} & \textbf{140.0} & 22.5          \\
        Toluene (50k)       & 1.6                           & \textbf{1.1}                       & \textbf{1.4}                 & \textbf{1.1}                       & 128.2  & \textbf{14.2} & \textbf{127.1} & 18.9          \\
        Uracil (50k)        & 1.3                           & \textbf{1.3}                       & \textbf{1.2}                 & 1.4                                & 128.8  & \textbf{14.1} & \textbf{127.4} & 17.7          \\
        \bottomrule
    \end{tabular}
\end{table}

\Cref{tab:md17} shows the MD17 results for the SchNet and GemNet-dT backbone models.
For the SchNet model, \methodacronym{} performs better on energy predictions but worse on force predictions. We believe that the decrease in force prediction performance is due to the fact that SchNet calculates forces by differentiating the energy with respect to atom positions, and, as a result, any quantization errors in the energy predictions will also propagate to the force predictions.
Similarly for the GemNet model, we see improved energy prediction performance across about half of the molecules but decreased force prediction performance overall.

\subsection{QM9: Molecular Property Prediction}
QM9~\cite{doi:10.1021/ci300415d, ramakrishnan2014quantum} is a dataset which contains quantum chemical properties of 134,000 organic molecules made up of CHNOF atoms. For this dataset, we train the MXMNet~\cite{zhang2020molecular} as our backbone. To keep the methodology consistent with MXMNet, we only use atomization energies for \(U_0\), \(U\), \(H\), and \(G\). For each target property, we train a separate model for 100 epochs and report the best validation results.

\begin{table}[t]
    \begin{minipage}[t]{0.48\linewidth}
          \resizebox{0.8\textwidth}{!}{
                \centering
                \begin{minipage}{\textwidth}
                    \centering
                    \captionsetup{justification=justified}
                    \caption{Validation scores for the MXMNet backbone model on the QM9 dataset}
                      \begin{tabular}{lrr|rrr}\toprule
                                                  & \multicolumn{2}{c}{Baseline} & \multicolumn{2}{c}{\methodacronym{}}                                     \\\cmidrule{2-5}
                                                  & MAE                          & WT                                   & MAE             & WT              \\\midrule
                            \(\mu\) (D)           & 0.0444                       & 56.9\%                               & \textbf{0.0397} & \textbf{60.9\%} \\
                            \(a\) (\(a^3_{0}\))   & \textbf{0.0624}              & \textbf{95.9\%}                      & 0.0716          & 95.1\%          \\
                            \(e_{HOMO}\) (eV)     & 0.0290                       & 26.5\%                               & \textbf{0.0275} & \textbf{33.1\%} \\
                            \(e_{LUMO}\) (eV)     & 0.0227                       & \textbf{40.2\%}                      & \textbf{0.0225} & 39.9\%          \\
                            \(\Delta e\) (eV)     & 0.0456                       & 29.8\%                               & \textbf{0.0414} & \textbf{36.4\%} \\
                            \(R^2\) (\(a^2_{0}\)) & 1.3034                       & 93.9\%                               & \textbf{0.9813} & \textbf{95.9\%} \\
                            \(ZPVE\) (eV)         & 0.0015                       & 99.1\%                               & \textbf{0.0014} & \textbf{99.5\%} \\
                            \(U_{0}\) (eV)        & \textbf{0.0226}              & \textbf{98.9\%}                      & 0.0236          & \textbf{98.9\%} \\
                            \(U\) (eV)            & \textbf{0.0143}              & \textbf{99.7\%}                      & 0.0235          & 98.9\%          \\
                            \(H\) (eV)            & \textbf{0.0144}              & \textbf{99.6\%}                      & 0.0228          & 99.0\%          \\
                            \(G\) (eV)            & \textbf{0.0138}              & \textbf{99.6\%}                      & 0.0223          & 98.7\%          \\
                            \(c_{v}\) (cal/mol.K) & \textbf{0.0253}              & \textbf{82.7\%}                      & 0.0272          & 80.5\%          \\
                            \bottomrule
                      \end{tabular}
                      \label{tab:qm9}
                \end{minipage}
          }
    \end{minipage}
    \quad
    \begin{minipage}[t]{0.48\linewidth}
          \resizebox{0.8\textwidth}{!}{
                \centering
                \begin{minipage}{\textwidth}
                    \centering
                    \captionsetup{justification=justified}
                      \caption{Uncertainty metrics for GemNet-dT evaluated on the OC20 ID validation split}
                      \begin{tabular}{lrrrr}\toprule
                                                                & MACE            & RMSCE           & MA              \\\midrule
                            Ensemble                            & 0.2752          & 0.3119          & 0.2780          \\
                            Ensemble + IR Recal                 & 0.0237          & 0.0334          & 0.0240          \\
                            \methodacronym{}-Entropy            & 0.0427          & 0.0668          & 0.0431          \\
                            \methodacronym{}-Entropy + IR Recal & 0.0116          & 0.0303          & 0.0117          \\
                            \methodacronym{}-KL                 & 0.0542          & 0.0664          & 0.0547          \\
                            \methodacronym{}-KL + IR Recal      & \textbf{0.0057} & \textbf{0.0165} & \textbf{0.0057} \\
                            \bottomrule
                      \end{tabular}
                      \label{tab:uncertainty}
                \end{minipage}
          }
    \end{minipage}
    \vspace{-20pt}
\end{table}
\Cref{tab:qm9} shows QM9 validation scores for the MXMNet backbone model. We see very promising results --- with at least a 4\% improvement in the threshold metric --- across multiple different targets, including \(\mu\), \(e_{HOMO}\), \(e_{LUMO}\) and \(\Delta e\). For the rest of the targets, both models seem to produce similarly accurate results, from a threshold metric perspective.

\subsection{Uncertainty Quantification}
To evaluate the reliability of the uncertainties computed, we compare the uncertainties predicted by GemNet* to the uncertainties predicted by an ensemble of 5 baseline GemNet-dT models, both trained with the position-prediction auxiliary task. All of these models are trained on the full OC20 IS2RE dataset and are evaluated on the ID validation split.
For each model, we report the mean absolute calibration error (MACE), the root mean squared calibration error (RMSE), and the miscalibration area (MA).
All methods are shown with and without isotonic regression recalibration technique proposed by~\citet{kuleshov2018accurate}. For this purpose, we set aside 10\% of the evaluation data split for recalibration.
For the \methodacronym{} uncertainty methods, the data used for isotonic regression recalibration and for our readjustment procedure are the same (\ie, the same 10\% of the evaluation data split).

\Cref{tab:uncertainty} shows these uncertainty results. There are two key observations from these results:
1) The \methodacronym{} KL method provides excellent uncertainty metrics, outperforming all other methods.
2) While the \methodacronym{} entropy and KL methods can provide acceptable uncertainty metrics without the need for isotonic regression recalibration, performing this recalibration further the uncertainty estimates.


\subsection{Ablations}
\subsubsection{Bin Distribution}
\label{sec:ablation:bin-distribution}

We investigate the impact of changing the bin distribution hyperparameters. Specifically, we experiment with changing the number of histograms, the number of bins in each histogram, and the bin distribution (\ie, uniform or Gaussian).
All experiments are done on the \methodacronym{}+Pos version of the model from \cref{tab:oc20-val} and are evaluated on the ID validation split of the OC20 dataset.
\Cref{tab:bindist-ablation} shows these results. We see, clearly, that using a Gaussian bin distribution produces substantially better results.
Increasing the number of histograms seems to help with the uniform bin distributions, but not with the Gaussian bin distribution.
Conversely, increasing the number of bins in each histogram seems to help with the Gaussian bin distribution, but not with the uniform bin distribution.

\subsubsection{Loss Coefficients}
\label{sec:ablation:loss-coefficients}
We investigate the effect of the histogram and distance-based loss coefficients.
To do this, we compute a normalizing constant such that the magnitude of the HL and DL losses are equal. Then, we experiment with different values for these coefficients. The results can be found in \cref{tab:alpha-beta-ablation}.
Our primary observation was that a higher histogram loss coefficient \(\alpha_{HL}\) yields better results and faster training at the beginning of training, but as training progresses, the biases outlined in \cref{sec:historeg} begin to show their effects.
To demonstrate this, we launch a variant where we schedule the loss coefficients from \(\alpha_{HL} = 0.9, \alpha_{DL} = 0.1\) to \(\alpha_{HL} = 0.05, \alpha_{DL} = 0.95\) over the course of 20 epochs. The results for this run are displayed on the ``Scheduled'' row of \cref{tab:alpha-beta-ablation}.

\subsection{Analysis}

\begin{table}[t]
    \begin{minipage}[t]{0.48\linewidth}
          \resizebox{0.85\textwidth}{!}{
                \centering
                \begin{minipage}{\textwidth}
                    \centering
                    \captionsetup{justification=justified}
                    \caption{The impact of different HL and DL coefficients on the performance of the GemNet+Pos model on the ID split of OC20}

                    \begin{tabular}{lrrrr}\toprule
                        \(\alpha_{HL}\)               & \(\alpha_{DL}\) & E-MAE & EWT            \\\midrule
                        1.00                          & 0.00            & 0.455 & \textbf{8.3\%} \\
                        0.90                          & 0.10            & 0.457 & 8.0\%          \\
                        0.75                          & 0.25            & 0.459 & 8.1\%          \\
                        0.50                          & 0.50            & 0.457 & 8.1\%          \\
                        0.25                          & 0.75            & 0.462 & 7.9\%          \\
                        0.10                          & 0.90            & 0.473 & 7.3\%          \\
                        0.00                          & 1.00            & 2.253 & 0.8\%          \\
                        \multicolumn{2}{c}{Scheduled} & \textbf{0.450}  & 7.8\%                  \\
                        \bottomrule
                    \end{tabular}
                    \label{tab:alpha-beta-ablation}
                \end{minipage}
          }
    \end{minipage}
    \quad
    \begin{minipage}[t]{0.48\linewidth}
          \resizebox{0.85\textwidth}{!}{
                \centering
                \begin{minipage}{\textwidth}
                    \centering
                    \captionsetup{justification=justified}
                    \caption{Validation results for running GemNet+Pos on the ID split of the OC20 IS2RE dataset, varying the bin distribution, number of histograms, and number of bins}
                    \begin{tabular}{lrrrr}\toprule
                                                & \(N_{\text{H}} \text{ x } N_{\text{B}}\) & Energy MAE     & EWT            \\\midrule
                        \multirow{3}{*}{Uniform}  & 32x2048                                  & 0.467          & 6.5\%
                        \\
                                                & 64x1024                                  & \textbf{0.464} & \textbf{7.2}\%
                        \\
                                                & 256x256                                  & 0.466          & 7.1\%
                        \\
                        \hline
                        \multirow{3}{*}{Gaussian} & 32x2048                                  & \textbf{0.454} & \textbf{8.3}\%
                        \\
                                                & 64x1024                                  & 0.455          & 7.9\%
                        \\
                                                & 256x256                                  & 0.459          & 7.3\%
                        \\
                        \bottomrule
                    \end{tabular}
                    \label{tab:bindist-ablation}
                \end{minipage}
          }
    \end{minipage}
    \vspace{-15pt}
\end{table}

\subsubsection{Effectiveness on OOD data}
\label{sec:eval:effectiveness}
One apparent drawback of our technique seems to be its lack of performance on OOD adsorbate and OOD both data on the OC20 dataset (Table 2). We attribute this to two reasons:
\begin{enumerate}
      \item Direct models, in general, seem to perform poorly on OOD adsorbate and OOD both. This might be because relaxation-based models have more physical intuition embedded in them, whereas the direct models tend to overfit to the distribution they're trained on.
      \item Our method defines the histogram bin distribution over the range of possible target values in the train set. This selection of bin distribution introduces a bias in the model. As a result, if the OOD data is not distributed similarly to the training data, the model's histogram distribution will be suboptimal. This argument can be supported by examining the target distribution histograms of the train and OOD splits (see \cref{sec:appendix:dataset_histograms}).
\end{enumerate}


\subsubsection{Quantization Error and Impact of Distance-Based Loss}
\label{sec:eval:quantization}
Comparing MAE and EwT in \Cref{tab:oc20-val}, we notice that the histogram loss leads to
significantly better EwT but worse MAE. We suspect that this is due to the quantization error described in \cref{sec:distribution-quantization-error}. Adding the distance-based loss (the \methodacronym{} rows in \cref{tab:oc20-val}) shows improvements in both MAE and EwT. Scheduling the loss coefficients, as shown in \cref{sec:ablation:loss-coefficients}, attains the best results, suggesting that the distance-based loss contributes to the model's ability to learn the distribution of the target values.

\section{Conclusion}
\label{sec:conclusion}
In this paper, we proposed \methodname{}, a method for enhancing regression performance by learning distributional representations of target values. We show that this method is effective for molecular regression tasks, demonstrating consistent improvements over the baselines across various datasets
(OC20, MD17, QM9) and backbone GNN architectures.

Developing better methods for molecular regression tasks can have many positive consequences, enabling progress in the many subfields of chemistry, such as drug discovery and synthesis and development and discovery of new battery technologies. However, unintended consequences of ML for molecular property prediction must also be considered -- hypothetically, similar technologies can be used to study explosive materials and their properties, which can be used to develop weapons.

\bibliography{reference}

\begin{thebibliography}{9}
\providecommand{\natexlab}[1]{#1}
\providecommand{\url}[1]{\texttt{#1}}
\expandafter\ifx\csname urlstyle\endcsname\relax
  \providecommand{\doi}[1]{doi: #1}\else
  \providecommand{\doi}{doi: \begingroup \urlstyle{rm}\Url}\fi

\bibitem[Batzner et~al.(2021)Batzner, Smidt, Sun, Mailoa, Kornbluth, Molinari,
  and Kozinsky]{batzner2021se}
Simon Batzner, Tess~E Smidt, Lixin Sun, Jonathan~P Mailoa, Mordechai Kornbluth,
  Nicola Molinari, and Boris Kozinsky.
\newblock Se (3)-equivariant graph neural networks for data-efficient and
  accurate interatomic potentials.
\newblock \emph{arXiv preprint arXiv:2101.03164}, 2021.

\bibitem[Chanussot* et~al.(2021)Chanussot*, Das*, Goyal*, Lavril*, Shuaibi*,
  Riviere, Tran, Heras-Domingo, Ho, Hu, Palizhati, Sriram, Wood, Yoon, Parikh,
  Zitnick, and Ulissi]{ocp_dataset}
Lowik Chanussot*, Abhishek Das*, Siddharth Goyal*, Thibaut Lavril*, Muhammed
  Shuaibi*, Morgane Riviere, Kevin Tran, Javier Heras-Domingo, Caleb Ho, Weihua
  Hu, Aini Palizhati, Anuroop Sriram, Brandon Wood, Junwoong Yoon, Devi Parikh,
  C.~Lawrence Zitnick, and Zachary Ulissi.
\newblock Open catalyst 2020 (oc20) dataset and community challenges.
\newblock \emph{ACS Catalysis}, 2021.
\newblock \doi{10.1021/acscatal.0c04525}.

\bibitem[Gasteiger et~al.(2021)Gasteiger, Becker, and
  G{\"u}nnemann]{klicpera2021gemnet}
Johannes Gasteiger, Florian Becker, and Stephan G{\"u}nnemann.
\newblock {GemNet: Universal directional graph neural networks for molecules}.
\newblock \emph{arXiv preprint arXiv:2106.08903}, 2021.

\bibitem[Girshick(2015)]{girshick2015fast}
Ross Girshick.
\newblock Fast r-cnn.
\newblock In \emph{Proceedings of the IEEE international conference on computer
  vision}, pages 1440--1448, 2015.

\bibitem[Horace~He(2021)]{functorch2021}
Richard~Zou Horace~He.
\newblock functorch: Jax-like composable function transforms for pytorch.
\newblock \url{https://github.com/pytorch/functorch}, 2021.

\bibitem[Imani and White(2018)]{imani2018improving}
Ehsan Imani and Martha White.
\newblock Improving regression performance with distributional losses.
\newblock In \emph{International Conference on Machine Learning}, pages
  2157--2166. PMLR, 2018.

\bibitem[Paszke et~al.(2019)Paszke, Gross, Massa, Lerer, Bradbury, Chanan,
  Killeen, Lin, Gimelshein, Antiga, Desmaison, Kopf, Yang, DeVito, Raison,
  Tejani, Chilamkurthy, Steiner, Fang, Bai, and Chintala]{NEURIPS2019_9015}
Adam Paszke, Sam Gross, Francisco Massa, Adam Lerer, James Bradbury, Gregory
  Chanan, Trevor Killeen, Zeming Lin, Natalia Gimelshein, Luca Antiga, Alban
  Desmaison, Andreas Kopf, Edward Yang, Zachary DeVito, Martin Raison, Alykhan
  Tejani, Sasank Chilamkurthy, Benoit Steiner, Lu~Fang, Junjie Bai, and Soumith
  Chintala.
\newblock Pytorch: An imperative style, high-performance deep learning library.
\newblock In H.~Wallach, H.~Larochelle, A.~Beygelzimer, F.~d\textquotesingle
  Alch\'{e}-Buc, E.~Fox, and R.~Garnett, editors, \emph{Advances in Neural
  Information Processing Systems 32}, pages 8024--8035. Curran Associates,
  Inc., 2019.
\newblock URL
  \url{http://papers.neurips.cc/paper/9015-pytorch-an-imperative-style-high-performance-deep-learning-library.pdf}.

\bibitem[Sch{\"u}tt et~al.(2021)Sch{\"u}tt, Unke, and
  Gastegger]{schutt2021equivariant}
Kristof Sch{\"u}tt, Oliver Unke, and Michael Gastegger.
\newblock Equivariant message passing for the prediction of tensorial
  properties and molecular spectra.
\newblock In \emph{International Conference on Machine Learning}, pages
  9377--9388. PMLR, 2021.

\bibitem[Ying et~al.(2021)Ying, Cai, Luo, Zheng, Ke, He, Shen, and
  Liu]{ying2021transformers}
Chengxuan Ying, Tianle Cai, Shengjie Luo, Shuxin Zheng, Guolin Ke, Di~He,
  Yanming Shen, and Tie-Yan Liu.
\newblock Do transformers really perform badly for graph representation?
\newblock \emph{Advances in Neural Information Processing Systems}, 34, 2021.

\end{thebibliography}


\begin{thebibliography}{33}
\providecommand{\natexlab}[1]{#1}
\providecommand{\url}[1]{\texttt{#1}}
\expandafter\ifx\csname urlstyle\endcsname\relax
  \providecommand{\doi}[1]{doi: #1}\else
  \providecommand{\doi}{doi: \begingroup \urlstyle{rm}\Url}\fi

\bibitem[Batzner et~al.(2021)Batzner, Smidt, Sun, Mailoa, Kornbluth, Molinari,
  and Kozinsky]{batzner2021se}
Simon Batzner, Tess~E Smidt, Lixin Sun, Jonathan~P Mailoa, Mordechai Kornbluth,
  Nicola Molinari, and Boris Kozinsky.
\newblock Se (3)-equivariant graph neural networks for data-efficient and
  accurate interatomic potentials.
\newblock \emph{arXiv preprint arXiv:2101.03164}, 2021.

\bibitem[Brandstetter et~al.(2021)Brandstetter, Hesselink, van~der Pol,
  Bekkers, and Welling]{brandstetter2021geometric}
Johannes Brandstetter, Rob Hesselink, Elise van~der Pol, Erik Bekkers, and Max
  Welling.
\newblock Geometric and physical quantities improve e (3) equivariant message
  passing.
\newblock \emph{arXiv preprint arXiv:2110.02905}, 2021.

\bibitem[Cai et~al.(2020)Cai, Luo, Xu, He, Liu, and Wang]{caigraphnorm}
Tianle Cai, Shengjie Luo, Keyulu Xu, Di~He, Tie-Yan Liu, and Liwei Wang.
\newblock Graphnorm: A principled approach to accelerating graph neural network
  training, 2020.
\newblock URL \url{https://arxiv.org/abs/2009.03294}.

\bibitem[Chanussot* et~al.(2021)Chanussot*, Das*, Goyal*, Lavril*, Shuaibi*,
  Riviere, Tran, Heras-Domingo, Ho, Hu, Palizhati, Sriram, Wood, Yoon, Parikh,
  Zitnick, and Ulissi]{ocp_dataset}
Lowik Chanussot*, Abhishek Das*, Siddharth Goyal*, Thibaut Lavril*, Muhammed
  Shuaibi*, Morgane Riviere, Kevin Tran, Javier Heras-Domingo, Caleb Ho, Weihua
  Hu, Aini Palizhati, Anuroop Sriram, Brandon Wood, Junwoong Yoon, Devi Parikh,
  C.~Lawrence Zitnick, and Zachary Ulissi.
\newblock Open catalyst 2020 (oc20) dataset and community challenges.
\newblock \emph{ACS Catalysis}, 2021.
\newblock \doi{10.1021/acscatal.0c04525}.

\bibitem[Chmiela et~al.(2017)Chmiela, Tkatchenko, Sauceda, Poltavsky,
  Sch{\"u}tt, and M{\"u}ller]{chmiela2017machine}
Stefan Chmiela, Alexandre Tkatchenko, Huziel~E Sauceda, Igor Poltavsky,
  Kristof~T Sch{\"u}tt, and Klaus-Robert M{\"u}ller.
\newblock Machine learning of accurate energy-conserving molecular force
  fields.
\newblock \emph{Science advances}, 3\penalty0 (5):\penalty0 e1603015, 2017.

\bibitem[Do et~al.(2021)Do, Nguyen, Bekoulis, Munteanu, and
  Deligiannis]{do_dropnode}
Tien~Huu Do, Duc~Minh Nguyen, Giannis Bekoulis, Adrian Munteanu, and Nikos
  Deligiannis.
\newblock Graph convolutional neural networks with node transition
  probability-based message passing and {DropNode} regularization.
\newblock \emph{Expert Systems with Applications}, 174:\penalty0 114711, jul
  2021.
\newblock \doi{10.1016/j.eswa.2021.114711}.
\newblock URL \url{https://doi.org/10.1016%2Fj.eswa.2021.114711}.

\bibitem[Duvenaud et~al.(2015)Duvenaud, Maclaurin, Aguilera-Iparraguirre,
  Gómez-Bombarelli, Hirzel, Aspuru-Guzik, and
  Adams]{duvenaud_convolutional_2015}
David~K. Duvenaud, Dougal Maclaurin, Jorge Aguilera-Iparraguirre, Rafael
  Gómez-Bombarelli, Timothy Hirzel, Alán Aspuru-Guzik, and Ryan~P. Adams.
\newblock Convolutional {Networks} on {Graphs} for {Learning} {Molecular}
  {Fingerprints}.
\newblock In \emph{{NeurIPS}}, 2015.

\bibitem[Gasteiger et~al.(2020{\natexlab{a}})Gasteiger, Giri, Margraf, and
  G{\"u}nnemann]{klicpera2020fast}
Johannes Gasteiger, Shankari Giri, Johannes~T Margraf, and Stephan
  G{\"u}nnemann.
\newblock Fast and uncertainty-aware directional message passing for
  non-equilibrium molecules.
\newblock \emph{arXiv preprint arXiv:2011.14115}, 2020{\natexlab{a}}.

\bibitem[Gasteiger et~al.(2020{\natexlab{b}})Gasteiger, Gro{\ss}, and
  G{\"u}nnemann]{klicpera2020directional}
Johannes Gasteiger, Janek Gro{\ss}, and Stephan G{\"u}nnemann.
\newblock Directional message passing for molecular graphs.
\newblock \emph{arXiv preprint arXiv:2003.03123}, 2020{\natexlab{b}}.

\bibitem[Gasteiger et~al.(2021)Gasteiger, Becker, and
  G{\"u}nnemann]{klicpera2021gemnet}
Johannes Gasteiger, Florian Becker, and Stephan G{\"u}nnemann.
\newblock {GemNet: Universal directional graph neural networks for molecules}.
\newblock \emph{arXiv preprint arXiv:2106.08903}, 2021.

\bibitem[Gasteiger et~al.(2022)Gasteiger, Shuaibi, Sriram, G{\"u}nnemann,
  Ulissi, Zitnick, and Das]{gasteiger2022graph}
Johannes Gasteiger, Muhammed Shuaibi, Anuroop Sriram, Stephan G{\"u}nnemann,
  Zachary Ulissi, C~Lawrence Zitnick, and Abhishek Das.
\newblock How do graph networks generalize to large and diverse molecular
  systems?
\newblock \emph{arXiv preprint arXiv:2204.02782}, 2022.

\bibitem[Gilmer et~al.(2017)Gilmer, Schoenholz, Riley, Vinyals, and
  Dahl]{gilmer_neural_2017}
Justin Gilmer, Samuel~S. Schoenholz, Patrick~F. Riley, Oriol Vinyals, and
  George~E. Dahl.
\newblock Neural {Message} {Passing} for {Quantum} {Chemistry}.
\newblock In \emph{{ICML}}, 2017.

\bibitem[Godwin et~al.(2022)Godwin, Schaarschmidt, Gaunt, Sanchez-Gonzalez,
  Rubanova, Veli{\v{c}}kovi{\'c}, Kirkpatrick, and Battaglia]{godwin2021very}
Jonathan Godwin, Michael Schaarschmidt, Alexander Gaunt, Alvaro
  Sanchez-Gonzalez, Yulia Rubanova, Petar Veli{\v{c}}kovi{\'c}, James
  Kirkpatrick, and Peter Battaglia.
\newblock Simple gnn regularisation for 3d molecular property prediction and
  beyond.
\newblock In \emph{{ICLR}}, 2022.

\bibitem[Hardt et~al.(2016)Hardt, Recht, and Singer]{hardt2016train}
Moritz Hardt, Ben Recht, and Yoram Singer.
\newblock Train faster, generalize better: Stability of stochastic gradient
  descent.
\newblock In \emph{International conference on machine learning}, pages
  1225--1234. PMLR, 2016.

\bibitem[Imani and White(2018)]{imani2018improving}
Ehsan Imani and Martha White.
\newblock Improving regression performance with distributional losses.
\newblock In \emph{International Conference on Machine Learning}, pages
  2157--2166. PMLR, 2018.

\bibitem[Kuleshov et~al.(2018)Kuleshov, Fenner, and
  Ermon]{kuleshov2018accurate}
Volodymyr Kuleshov, Nathan Fenner, and Stefano Ermon.
\newblock Accurate uncertainties for deep learning using calibrated regression.
\newblock In \emph{International conference on machine learning}, pages
  2796--2804. PMLR, 2018.

\bibitem[Li et~al.(2020)Li, Xiong, Thabet, and Ghanem]{limsgnorm}
Guohao Li, Chenxin Xiong, Ali Thabet, and Bernard Ghanem.
\newblock Deepergcn: All you need to train deeper gcns, 2020.
\newblock URL \url{https://arxiv.org/abs/2006.07739}.

\bibitem[Liu et~al.(2021)Liu, Wang, Liu, Zhang, Oztekin, and
  Ji]{liu2021spherical}
Yi~Liu, Limei Wang, Meng Liu, Xuan Zhang, Bora Oztekin, and Shuiwang Ji.
\newblock Spherical message passing for 3d graph networks.
\newblock \emph{arXiv preprint arXiv:2102.05013}, 2021.

\bibitem[Pereyra et~al.(2017)Pereyra, Tucker, Chorowski, Kaiser, and
  Hinton]{Pereyra2017RegularizingNN}
Gabriel Pereyra, G.~Tucker, Jan Chorowski, Lukasz Kaiser, and Geoffrey~E.
  Hinton.
\newblock Regularizing neural networks by penalizing confident output
  distributions.
\newblock \emph{ArXiv}, abs/1701.06548, 2017.

\bibitem[Ramakrishnan et~al.(2014)Ramakrishnan, Dral, Rupp, and von
  Lilienfeld]{ramakrishnan2014quantum}
Raghunathan Ramakrishnan, Pavlo~O Dral, Matthias Rupp, and O~Anatole von
  Lilienfeld.
\newblock Quantum chemistry structures and properties of 134 kilo molecules.
\newblock \emph{Scientific Data}, 1, 2014.

\bibitem[Rong et~al.(2019)Rong, Huang, Xu, and Huang]{rongdropedge}
Yu~Rong, Wenbing Huang, Tingyang Xu, and Junzhou Huang.
\newblock Dropedge: Towards deep graph convolutional networks on node
  classification, 2019.
\newblock URL \url{https://arxiv.org/abs/1907.10903}.

\bibitem[Ruddigkeit et~al.(2012)Ruddigkeit, van Deursen, Blum, and
  Reymond]{doi:10.1021/ci300415d}
Lars Ruddigkeit, Ruud van Deursen, Lorenz~C. Blum, and Jean-Louis Reymond.
\newblock Enumeration of 166 billion organic small molecules in the chemical
  universe database gdb-17.
\newblock \emph{Journal of Chemical Information and Modeling}, 52\penalty0
  (11):\penalty0 2864--2875, 2012.
\newblock \doi{10.1021/ci300415d}.
\newblock URL \url{https://doi.org/10.1021/ci300415d}.
\newblock PMID: 23088335.

\bibitem[Sch{\"u}tt et~al.(2017)Sch{\"u}tt, Kindermans, Sauceda~Felix, Chmiela,
  Tkatchenko, and M{\"u}ller]{schutt2017schnet}
Kristof Sch{\"u}tt, Pieter-Jan Kindermans, Huziel~Enoc Sauceda~Felix, Stefan
  Chmiela, Alexandre Tkatchenko, and Klaus-Robert M{\"u}ller.
\newblock Schnet: A continuous-filter convolutional neural network for modeling
  quantum interactions.
\newblock \emph{Advances in neural information processing systems}, 30, 2017.

\bibitem[Sch{\"u}tt et~al.(2021)Sch{\"u}tt, Unke, and
  Gastegger]{schutt2021equivariant}
Kristof Sch{\"u}tt, Oliver Unke, and Michael Gastegger.
\newblock Equivariant message passing for the prediction of tensorial
  properties and molecular spectra.
\newblock In \emph{International Conference on Machine Learning}, pages
  9377--9388. PMLR, 2021.

\bibitem[Shi et~al.(2022)Shi, Zheng, Ke, Shen, You, He, Luo, Liu, He, and
  Liu]{shi2022benchmarking}
Yu~Shi, Shuxin Zheng, Guolin Ke, Yifei Shen, Jiacheng You, Jiyan He, Shengjie
  Luo, Chang Liu, Di~He, and Tie-Yan Liu.
\newblock Benchmarking graphormer on large-scale molecular modeling datasets.
\newblock \emph{arXiv preprint arXiv:2203.04810}, 2022.

\bibitem[Shuaibi et~al.(2021)Shuaibi, Kolluru, Das, Grover, Sriram, Ulissi, and
  Zitnick]{shuaibi2021rotation}
Muhammed Shuaibi, Adeesh Kolluru, Abhishek Das, Aditya Grover, Anuroop Sriram,
  Zachary Ulissi, and C~Lawrence Zitnick.
\newblock Rotation invariant graph neural networks using spin convolutions.
\newblock \emph{arXiv preprint arXiv:2106.09575}, 2021.

\bibitem[Sriram et~al.(2022)Sriram, Das, Wood, Goyal, and
  Zitnick]{sriram2022towards}
Anuroop Sriram, Abhishek Das, Brandon~M Wood, Siddharth Goyal, and C~Lawrence
  Zitnick.
\newblock Towards training billion parameter graph neural networks for atomic
  simulations.
\newblock \emph{arXiv preprint arXiv:2203.09697}, 2022.

\bibitem[Szegedy et~al.(2016)Szegedy, Vanhoucke, Ioffe, Shlens, and
  Wojna]{Szegedy_2016_CVPR}
Christian Szegedy, Vincent Vanhoucke, Sergey Ioffe, Jon Shlens, and Zbigniew
  Wojna.
\newblock Rethinking the inception architecture for computer vision.
\newblock In \emph{Proceedings of the IEEE Conference on Computer Vision and
  Pattern Recognition (CVPR)}, June 2016.

\bibitem[Xie and Grossman(2018)]{xie2018crystal}
Tian Xie and Jeffrey~C Grossman.
\newblock Crystal graph convolutional neural networks for an accurate and
  interpretable prediction of material properties.
\newblock \emph{Physical review letters}, 120\penalty0 (14):\penalty0 145301,
  2018.

\bibitem[Ying et~al.(2021)Ying, Cai, Luo, Zheng, Ke, He, Shen, and
  Liu]{ying2021transformers}
Chengxuan Ying, Tianle Cai, Shengjie Luo, Shuxin Zheng, Guolin Ke, Di~He,
  Yanming Shen, and Tie-Yan Liu.
\newblock Do transformers really perform badly for graph representation?
\newblock \emph{Advances in Neural Information Processing Systems}, 34, 2021.

\bibitem[Zhang et~al.(2020)Zhang, Liu, and Xie]{zhang2020molecular}
Shuo Zhang, Yang Liu, and Lei Xie.
\newblock Molecular mechanics-driven graph neural network with multiplex graph
  for molecular structures.
\newblock \emph{arXiv preprint arXiv:2011.07457}, 2020.

\bibitem[Zhao and Akoglu(2019)]{zhaopairnorm}
Lingxiao Zhao and Leman Akoglu.
\newblock Pairnorm: Tackling oversmoothing in gnns, 2019.
\newblock URL \url{https://arxiv.org/abs/1909.12223}.

\bibitem[Zhou et~al.(2020)Zhou, Huang, Li, Zha, Chen, and Hu]{zhaogroupnorm}
Kaixiong Zhou, Xiao Huang, Yuening Li, Daochen Zha, Rui Chen, and Xia Hu.
\newblock Towards deeper graph neural networks with differentiable group
  normalization, 2020.
\newblock URL \url{https://arxiv.org/abs/2006.06972}.

\end{thebibliography}

\appendix
\section{Appendix}

\subsection{Implementation, Experimental Setup, and Train Times}
\label{appendix:compute-resources}
In this section, we will describe the training hardware and the total training time for all trained models. All models are programmed in Python using the PyTorch library~\citeappendix{NEURIPS2019_9015}. All model and training code will be open sourced with MIT license upon acceptance.

\subsubsection{GemNet Auxiliary Position}
For our GemNet-dT~\citeappendix{klicpera2021gemnet} experiments, we experiment with a variant that uses the auxiliary position prediction task from \citetappendix{ying2021transformers}. To do this, we re-purpose the force output head of GemNet-dT to predict positions instead. This deviates from the original author's implementation, but it yields improved predictions (see \cref{tab:oc20-val}).

\subsubsection{PaiNN and NequIP Models}
Our PaiNN~\citeappendix{schutt2021equivariant} and NequIP~\citeappendix{batzner2021se} implementations contain one minor difference from the original implementation: The data normalization scheme is different to account for the adsorption energy reference used in OC20~\citeappendix{ocp_dataset}.

\subsubsection{OC20 Models}
\Cref{tab:appendix:oc20-train-times} shows the training times (in hours) for the models in \cref{tab:oc20-val}. 
All models are trained on 16 NVIDIA Tesla V100 Volta 32 GB GPUs.
Training is stopped either after 100 epochs or once the model converges (\ie, using early stopping).
The training time for GemNet* model in \cref{tab:oc20-test} is the same as the one shown for GemNet* in \cref{tab:appendix:oc20-train-times}.

\begin{table}[!htp]\centering
\caption{Training times (in hours) for the models in \cref{tab:oc20-val}}\label{tab:appendix:oc20-train-times}
\scriptsize
\begin{tabular}{lrrr}\toprule
& &Train Time (Hours) \\\midrule
\multirow{4}{*}{SchNet} &Baseline &4.40 \\
&HL &3.28 \\
&DL &0.29 \\
&DMoE &4.61 \\
\multirow{4}{*}{DimeNet++} &Baseline &14.58 \\
&HL &12.56 \\
&DL &0.37 \\
&DMoE &12.60 \\
\multirow{6}{*}{GemNet-dT} &Baseline &5.07 \\
&HL &3.18 \\
&DL &5.34 \\
&DMoE &2.54 \\
&Baseline + Pos &7.12 \\
&DMoE + Pos &16.40 \\
\multirow{6}{*}{3D Graphormer} &Baseline &63.35 \\
&HL &13.26 \\
&DL &63.53 \\
&DMoE &38.49 \\
&Baseline + Pos &81.55 \\
&DMoE + Pos &88.64 \\
GemNet* &DMoE + Pos &115.28 \\
\bottomrule
\end{tabular}
\end{table}

For \cref{tab:uncertainty}, the total train time of the GemNet*-DMoE model is 115.28 hours, and the total train time of the baseline GemNet ensemble is 662.37 hours.

\Cref{tab:appendix:alpha-beta-train-times} shows the train times for the models of \cref{tab:alpha-beta-ablation}.
\begin{table}[!htp]\centering
\caption{Training times (in hours) for the models in \cref{tab:alpha-beta-ablation}}\label{tab:appendix:alpha-beta-train-times}
\scriptsize
\begin{tabular}{lrrr}\toprule
$\alpha$ &$\beta$ &Train Time \\\midrule
1.00 &0.00 &16.24 \\
0.90 &0.10 &9.32 \\
0.75 &0.25 &9.67 \\
0.50 &0.50 &11.06 \\
0.25 &0.75 &11.45 \\
0.10 &0.90 &4.18 \\
0.00 &1.00 &- \\
\multicolumn{2}{c}{Scheduled} &16.40 \\
\bottomrule
\end{tabular}
\end{table}

\Cref{tab:appendix:bindist-train-times} shows the train times for the models of \cref{tab:bindist-ablation}.
\begin{table}[!htp]\centering
\caption{Training times (in hours) for the models in \cref{tab:bindist-ablation}}\label{tab:appendix:bindist-train-times}
\scriptsize
\begin{tabular}{lrrr}\toprule
& &Train Time \\\midrule
\multirow{3}{*}{Uniform} &32x2048 &7.35 \\
&64x1024 &8.86 \\
&256x256 &6.28 \\
\multirow{3}{*}{Normal} &32x2048 &6.98 \\
&64x1024 &5.97 \\
&256x256 &5.41 \\
\bottomrule
\end{tabular}
\end{table}

\Cref{tab:appendix:induced-bin-dist-train-times} shows the train times for the models of \cref{tab:appendix:induced-dist}.
\begin{table}[!htp]\centering
    \caption{Training times (in hours) for the models in \cref{tab:appendix:induced-dist}}\label{tab:appendix:induced-bin-dist-train-times}
    \scriptsize
    \begin{tabular}{lrrrr}\toprule
    & &Normal Bin Distribution &Uniform Bin Distribution \\\midrule
    \multirow{6}{*}{Normal} &$\sigma=0.5 \mathbb{E}{\left[w\right]}$ &8.26 &9.27 \\
    &$\sigma=1 \mathbb{E}{\left[w\right]}$ &10.24 &8.12 \\
    &$\sigma=1.5 \mathbb{E}{\left[w\right]}$ &9.30 &6.60 \\
    &$\sigma=2 \mathbb{E}{\left[w\right]}$ &14.99 &6.05 \\
    &$\sigma=5 \mathbb{E}{\left[w\right]}$ &6.78 &6.65 \\
    &$\sigma=10 \mathbb{E}{\left[w\right]}$ &6.81 &7.45 \\
    Laplace &$b=1 \mathbb{E}{\left[w\right]}$ &10.79 &3.29 \\
    K-Categorical &$k=3$ &5.72 &5.37 \\
    Categorical & &16.01 &7.56 \\
    \bottomrule
    \end{tabular}
\end{table}

\Cref{tab:appendix:robust-train-times} shows the train times for the models of \cref{tab:appendix:robust-losses}.
\begin{table}[!htp]\centering
\caption{Training times (in hours) for the models in \cref{tab:appendix:robust-losses}}\label{tab:appendix:robust-train-times}
\scriptsize
\begin{tabular}{lrr}\toprule
&Train Time \\\midrule
L1 Loss &6.58 \\
L2 Loss &6.62 \\
Smooth L1 Loss &7.12 \\
DMoE &21.52 \\
\bottomrule
\end{tabular}
\end{table}

\subsubsection{MD17 Models}
\Cref{tab:appendix:md17-train-times} shows the training times (in hours) for the models in \cref{tab:md17}. 
All models are trained on a single NVIDIA Tesla V100 Volta 32 GB GPU.
Training is stopped either after 100 epochs or once the model converges (\ie, using early stopping).

\begin{table}[!htp]\centering
\caption{Training times (in hours) for models in \cref{tab:md17}}\label{tab:appendix:md17-train-times}
\scriptsize
\begin{tabular}{lrrrrr}\toprule
&\multicolumn{2}{c}{GemNet} &\multicolumn{2}{c}{SchNet} \\\cmidrule{2-5}
&Baseline &Distributional &Baseline &Distributional \\\midrule
Aspirin (1k) &0.47 &0.50 &0.49 &0.50 \\
Benzene (1k) &0.83 &0.60 &0.48 &0.62 \\
Ethanol (1k) &1.62 &1.80 &0.44 &0.53 \\
Malonaldehyde (1k) &0.66 &0.62 &0.46 &0.59 \\
Naphthalene (1k) &0.82 &0.52 &0.46 &0.54 \\
Salicylic (1k) &0.51 &0.49 &0.48 &0.54 \\
Toluene (1k) &0.66 &0.54 &0.48 &0.69 \\
Uracil (1k) &0.64 &0.50 &0.49 &0.53 \\
\hline
Aspirin (50k) &15.62 &24.28 &14.96 &16.84 \\
Benzene (50k) &25.77 &12.70 &15.46 &14.85 \\
Ethanol (50k) &28.10 &17.24 &13.73 &16.01 \\
Malonaldehyde (50k) &18.06 &17.01 &14.47 &14.32 \\
Naphthalene (50k) &16.78 &33.13 &15.30 &15.97 \\
Salicylic (50k) &26.41 &29.96 &14.84 &15.19 \\
Toluene (50k) &28.61 &33.58 &14.92 &13.15 \\
Uracil (50k) &26.94 &27.74 &14.51 &13.93 \\
\bottomrule
\end{tabular}
\end{table}

\subsubsection{QM9 Models}
\Cref{tab:appendix:qm9-train-times} shows the training times (in hours) for the models in \cref{tab:qm9}.
All models are trained on a single NVIDIA Tesla V100 Volta 32 GB GPU.
Training is stopped either after 100 epochs or once the model converges (\ie, using early stopping).

\begin{table}[!htp]\centering
    \caption{Training times (in hours) for the models in \cref{tab:qm9}}\label{tab:appendix:qm9-train-times}
    \scriptsize
    \begin{tabular}{lrrr}\toprule
    &Baseline &Distributional \\\midrule
    \(\mu\) (D) &4.58 &6.29 \\
    \(a\) (\(a^3_{0}\)) &5.07 &6.04 \\
    \(e_{HOMO}\) (eV) &5.36 &5.46 \\
    \(e_{LUMO}\) (eV) &4.59 &5.77 \\
    \(\Delta e\) (eV) &5.27 &5.79 \\
    \(R^2\) (\(a^2_{0}\)) &4.73 &5.65 \\
    \(ZPVE\) (eV) &4.99 &5.81 \\
    \(U_{0}\) (eV) &5.95 &5.77 \\
    \(U\) (eV) &4.70 &5.63 \\
    \(H\) (eV) &4.54 &5.63 \\
    \(G\) (eV) &5.01 &5.59 \\
    \(c_{v}\) (cal/mol.K) &4.46 &5.61 \\
    \bottomrule
    \end{tabular}
\end{table}

\subsection{Additional Biases}
\label{appendix:biases}
In addition to the biases outlined in \cref{sec:historeg}, we identify the following additional biases in the histogram regression technique in this section.

\subsubsection{Induced Distribution Bias}
This refers to the bias of using a Gaussian distribution (or any other induced distribution) to represent the underlying scalar target value. In other words, this is the bias of assuming some distributional representation of a scalar target. In cases where this bias is present, we may observe a decreasing cross-entropy loss but an increasing L1 loss. 

Our distance-based loss term helps alleviate this bias. Moreover, the $\alpha_{HL}$ and $\alpha_{DL}$ coefficients can be adjusted to minimize this bias. Empirically, we found that starting scheduling the coefficients such that $\alpha_{HL}$ is higher at the beginning of training (\eg, $\alpha_{HL} = 2.0$ and $\alpha_{DL} = 1.0$) and falls to a very low value as training progresses (\eg, $\alpha_{HL} = 0.1$ and $\alpha_{DL} = 1.0$) produces the best results (see \cref{tab:alpha-beta-ablation}).

\begin{figure*}[h]
      \includegraphics[width=1.0\textwidth]{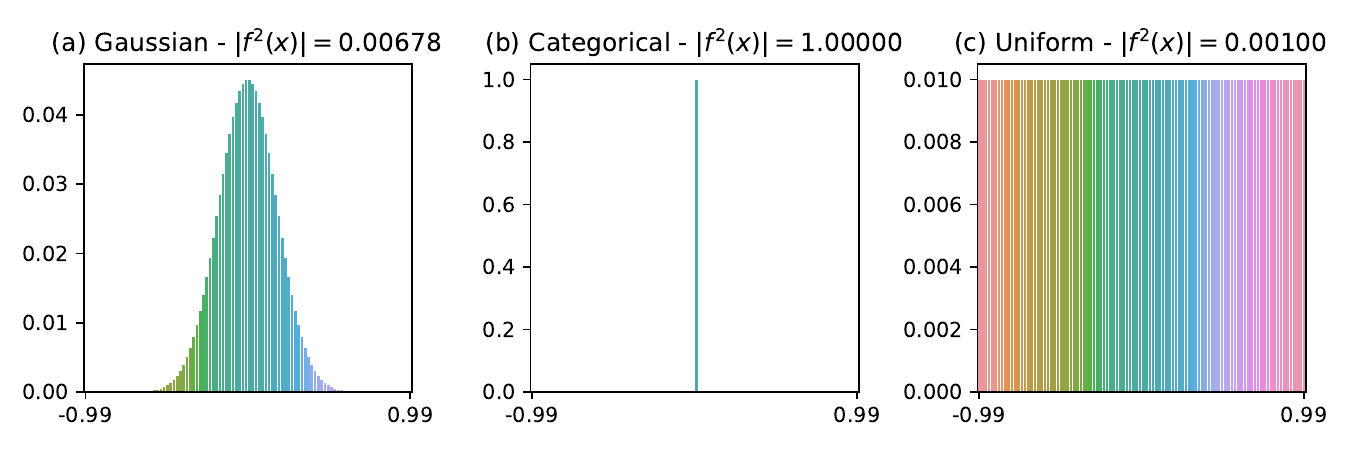}
      \caption{\({\left\|f^2(x)\right\|}_2\) values for example Gaussian, Categorical, and Uniform distributions.}
      \label{fig:appendix_distributions_l2}
\end{figure*}

\subsection{Quantization Error Formula}
To calculate the distribution quantization error of a histogram, shown in \cref{fig:method:normal-bin-dist}, we use the following formula:

\begin{equation}
      \label{eq:quantization-error}
      E_{Q}(Y, B) = \int_{B_i}^{B^{i + 1}}{\left|f(x) - Y_i\right| \cdot f(x) ~ dx}
\end{equation}

where \(f(x)\) is the induced distribution's continuous PDF, \(B\) is the histogram's bin endpoints, and \(Y\) is the bin values for the histogram we are evaluating.

\subsection{Choosing Bin Distributions for Multiple Histograms}
\begin{figure*}[h]
    \includegraphics[width=1.0\textwidth]{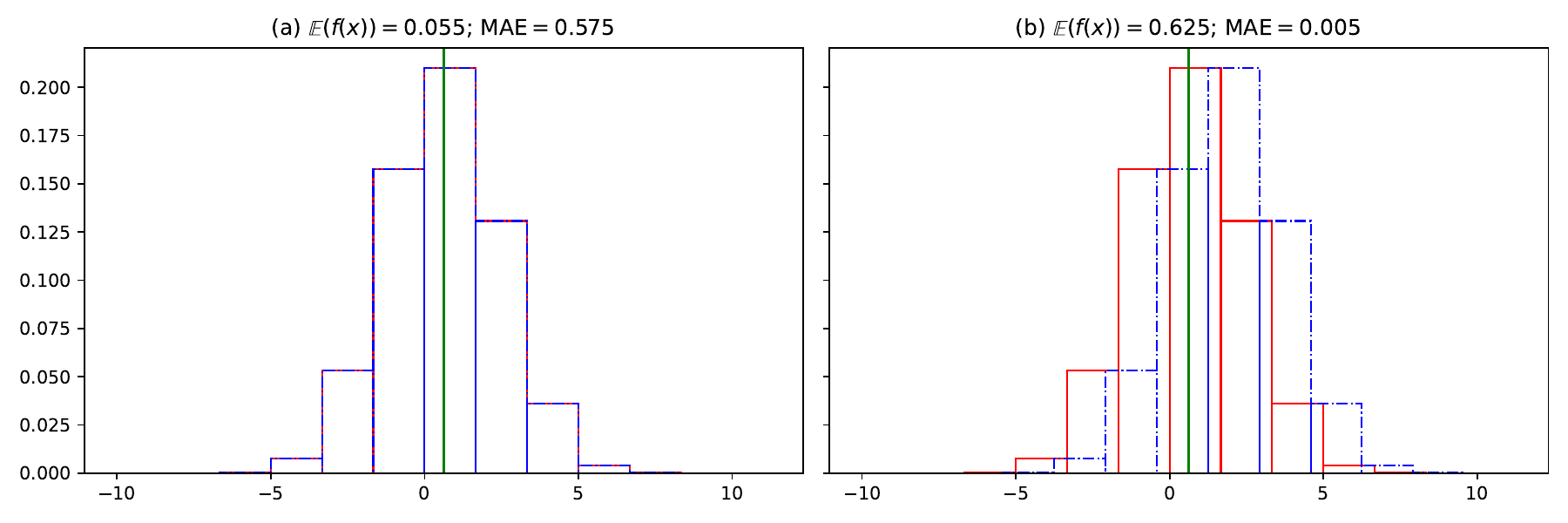}
    \caption{For each figure, the two output histograms are displayed in red and blue. The ground-truth value is displayed by the green line. (a) shows a multi-histogram scenario with uniform bin distributions where the two output histograms share the bin endpoint. (b) shows a multi-histogram scenario where the two output histograms' bin endpoint distributions are adjusted using \cref{alg:appendix:multihist}.}
    \label{fig:appendix:multiple-hist-uniform}
\end{figure*}
\begin{figure*}[h]
    \includegraphics[width=1.0\textwidth]{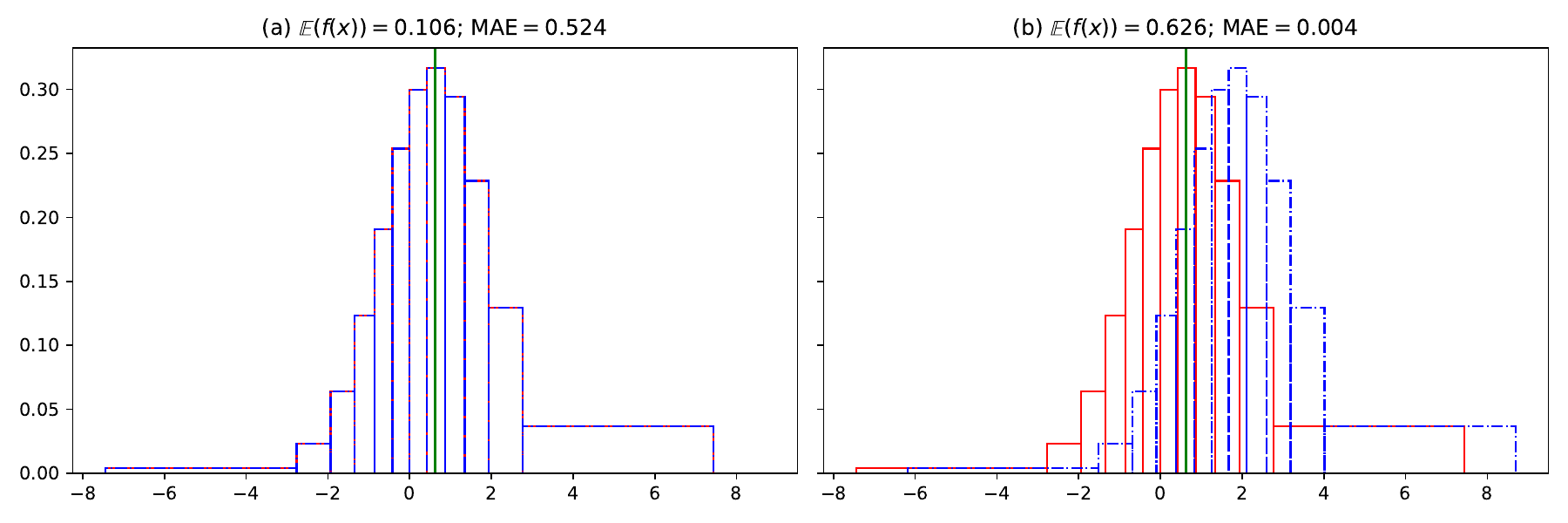}
    \caption{For each figure, the two output histograms are displayed in red and blue. The ground-truth value is displayed by the green line. (a) shows a multi-histogram scenario with normal bin distributions where the two output histograms share the bin endpoint. (b) shows a multi-histogram scenario where the two output histograms' bin endpoint distributions are adjusted using \cref{alg:appendix:multihist}.}
    \label{fig:appendix:multiple-hist-norm}
\end{figure*}

When we use multiple output histograms, simply defining new histograms on the same bin distribution  (as shown in \cref{fig:appendix:multiple-hist-uniform} (a) and \cref{fig:appendix:multiple-hist-norm} (b)) does not help reduce our quantization error. Instead, we must vary each histogram's bin distribution so that each new output head helps reduce our quantization error. \Cref{fig:appendix:multiple-hist-uniform} (b) shows an example of this for uniformly distributed bin endpoints. Similarly, \cref{fig:appendix:multiple-hist-norm} (b) shows an example of this for normally distributed bin endpoints. For both examples, the MAE between the expected value of the histogram and the ground-truth value is shown. We see that in both cases, simply shifting the bin distribution for the second histogram massively reduces this MAE.

To expand our algorithm to support multiple bin endpoints, we begin with the single endpoint scenario. We then slightly offset the $i^{\text{th}}$ histogram's bin endpoints by \(\frac{i}{M} \mathbb{E}(w)\), where \(M\) is the number of histograms in our output head. Concretely, for multiple histograms, we let \(b_0\) be the bin distribution as defined in \cref{eqn:bin-dist}:
\begin{equation}
      \vec{b}_0 = \left[ 0+\epsilon, Q_B(0), Q_B(1/N), Q_B(2/N), \dots, Q_B(1-1/N), 1-\epsilon \right]
\end{equation}
Then, the $i^{\text{th}}$ histogram's bin endpoints, $\vec{b}_i$, is given by:
\begin{equation}
\label{alg:appendix:multihist}
      \vec{b}_i = \vec{b}_0 + \frac{i}{M} \mathbb{E}(w)
\end{equation}


\subsection{Proofs for Loss Gradient Norm Bounds}

\subsubsection{Distributional Loss Function}
\label{sec:loss-grad-norm-bound:distributional}

The proof for \cref{lemma:loss-grad-norm-bound:distributional} is very similar to the proof found in Appendix A of \citetappendix{imani2018improving}.

\begin{lemma}
      \label{lemma:loss-grad-norm-bound:distributional}
      The gradient norm of the histogram loss term of \methodacronym{} is bounded as follows:
      \begin{equation}
            \left\| \frac{\partial}{\partial \theta}
            {\left[\vec{p} \cdot \log{f(x)}\right]} 
            \right\|
            \le
            \left\|f(x) - \vec{p} \right\|
            \cdot \left\| {\frac{\partial g(x)}{\partial \theta}} \right\|
      \end{equation}
      where \(\frac{\partial g(x)}{\partial \theta}\) denotes the Jacobian of the model, \(g(x)\).

\end{lemma}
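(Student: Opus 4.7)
The plan is to differentiate the histogram term directly using the chain rule through the softmax, and then apply a sub-multiplicative norm inequality. Let $L = \vec{p} \cdot \log f(x)$ with $f(x) = \text{softmax}(g(x))$. I would first write
\begin{equation}
\frac{\partial L}{\partial \theta} = \sum_i p_i \frac{\partial \log f_i(x)}{\partial \theta} = \sum_i p_i \sum_j \frac{\partial \log f_i(x)}{\partial g_j(x)} \frac{\partial g_j(x)}{\partial \theta}.
\end{equation}

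The central algebraic step is the standard log-softmax Jacobian identity $\partial \log f_i / \partial g_j = \delta_{ij} - f_j(x)$. Substituting this in and using $\sum_i p_i = 1$ (since $\vec{p}$ is a probability vector by construction in \cref{sec:historeg}), the cross term collapses:
\begin{equation}
\frac{\partial L}{\partial \theta} = \sum_i p_i \frac{\partial g_i(x)}{\partial \theta} - \sum_j f_j(x) \frac{\partial g_j(x)}{\partial \theta} = (\vec{p} - f(x))^\top \frac{\partial g(x)}{\partial \theta}.
\end{equation}

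From here I would simply take norms and apply the Cauchy--Schwarz inequality (equivalently, the sub-multiplicative bound relating a vector-matrix product to the product of vector and operator norms) to conclude
\begin{equation}
\left\| \frac{\partial L}{\partial \theta} \right\| \le \left\| \vec{p} - f(x) \right\| \cdot \left\| \frac{\partial g(x)}{\partial \theta} \right\|,
\end{equation}
which (up to the sign convention and reordering $\vec{p} - f(x)$ vs.\ $f(x) - \vec{p}$, which are equal in norm) is the desired inequality.

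I do not expect any significant obstacle: the only subtle point is making sure the softmax-Jacobian identity is applied correctly and that one recognizes $\sum_i p_i = 1$ to cancel the cross term into a clean $(\vec{p}-f(x))$ factor. The norm step is routine. The proof is essentially the same computation given in Appendix~A of \citetappendix{imani2018improving}, so the only novelty is the presentation in our notation.
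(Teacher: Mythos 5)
Your proposal is correct and follows essentially the same route as the paper's proof: both compute the softmax (equivalently, log-softmax) Jacobian, use $\sum_j p_j = 1$ to collapse the result to $\vec{p} - f(x)$, and then bound the norm of the product by the product of the norms. The only difference is cosmetic --- you fold the $1/f_j$ factor into a log-softmax Jacobian identity while the paper applies the softmax Jacobian to $\vec{p}\cdot\frac{1}{f(x)}$ explicitly --- so there is nothing further to add.
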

\begin{proof}
    We begin by taking the gradient using the chain rule.
      \begin{align*}
            \frac{\partial}{\partial \theta} {\left[\vec{p} \cdot \log{f(x)}\right]} 
             & = 
            {\frac{\partial g(x)}{\partial \theta}}
            {\frac{\partial f(x)}{\partial g(x)}}
            {\left[ \vec{p} \cdot \frac{1}{f(x)} \right]}
            \\
      {\left( {\frac{\partial f(x)}{\partial g(x)}}{\left[ \vec{p} \cdot \frac{1}{f(x)} \right]} \right)}_i
            & = \sum_j p_j \frac{1}{f_j(x)} f_j(x) \left( 1_{i=j} - f_i(x) \right)
            \\
            & = \sum_j p_j \left( 1_{i=j} - f_i(x) \right)
            \\
            & = p_i - f_i(x) \sum_j p_j
            \\
            & = p_i - f_i(x)
      \end{align*}

    We put this term back into the main equation and take the norm.
      \begin{align*}
            \frac{\partial}{\partial \theta} {\left[\vec{p} \cdot \log{f(x)}\right]} 
             & = 
            {\frac{\partial g(x)}{\partial \theta}}
            {\left[\vec{p} - f(x)\right]}
            \\
            \left\| \frac{\partial}{\partial \theta} {\left[\vec{p} \cdot \log{f(x)}\right]}  \right\| 
            & \le
            \left\|{\left[\vec{p} - f(x)\right]}\right\|
            \left\|{\frac{\partial g(x)}{\partial \theta}}\right\|
\end{align*}
    
\end{proof}

\subsubsection{Distance-based Loss Function}
\label{sec:loss-grad-norm-bound:distance-based}

\begin{lemma}
      \label{lemma:loss-grad-norm-bound:distance-based}
      The gradient norm of the distance-based loss term of \methodacronym{} is bounded as follows:
      \begin{equation}
            \left\| \frac{\partial}{\partial \theta}
            {\left| \vec{p} \cdot \vec{b} - f(x) \cdot \vec{b} ~ \right|} \right\|
            \le
            \sqrt{2} {\left\|f(x)\right\|}_2
            \cdot \left\| \vec{b} \right\|
            \cdot \left\|f(x) - \vec{p} \right\|
            \cdot \left\| {\frac{\partial g(x)}{\partial \theta}} \right\|
      \end{equation}
      where \(\frac{\partial g(x)}{\partial \theta}\) denotes the Jacobian of the model, \(g(x)\).
\end{lemma}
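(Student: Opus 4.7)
The plan is to mirror the strategy of Lemma~\ref{lemma:loss-grad-norm-bound:distributional}: differentiate through the absolute value, unroll the softmax, and bound the resulting Jacobian-vector product. Set $u(\theta) := \vec{p}\cdot\vec{b} - f(x)\cdot\vec{b}$. Wherever $u \neq 0$, $\nabla_\theta |u| = \mathrm{sign}(u)\,\nabla_\theta u$, so $\|\nabla_\theta|u|\| = \|\nabla_\theta u\|$ and the sign factor drops out (at $u=0$ the same bound controls any element of the subdifferential). Since $\vec{p}$ and $\vec{b}$ do not depend on $\theta$, this reduces the task to bounding $\|\vec{b}^{\top}\,(\partial f(x)/\partial\theta)\|$.

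Next, I would unroll the softmax in the same fashion as the previous lemma. With $J := \partial f/\partial g = \mathrm{diag}(f) - ff^{\top}$ (symmetric), a direct computation, identical in flavor to the simplification performed in the proof of Lemma~\ref{lemma:loss-grad-norm-bound:distributional}, yields $(J\vec{b})_j = f_j(b_j - \bar b)$, where $\bar b := \vec{b}^{\top} f$ is the model's predicted expected value. Chain rule together with submultiplicativity then gives
\[
\left\| \nabla_\theta |u| \right\| \;\le\; \|J\vec{b}\|_2 \cdot \Big\| \tfrac{\partial g(x)}{\partial \theta} \Big\|,
\]
so the remaining task is to show $\|J\vec{b}\|_2^{2} = \sum_j f_j^{2}(b_j - \bar b)^{2} \;\le\; 2\,\|f\|_2^{2}\,\|\vec{b}\|^{2}\,\|f - \vec{p}\|^{2}$.

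For this final step, I would introduce the ground-truth expectation $y := \vec{p}^{\top}\vec{b}$ and use the decomposition $b_j - \bar b = (b_j - y) - (\bar b - y)$, exploiting that $\bar b - y = (f - \vec{p})^{\top} \vec{b}$, which Cauchy-Schwarz bounds by $\|f - \vec{p}\|\,\|\vec{b}\|$. Squaring via $(a - c)^{2} \le 2a^{2} + 2c^{2}$ supplies the $\sqrt 2$ constant, and pulling $\sum_j f_j^{2} = \|f\|_2^{2}$ out of the $j$-independent term contributes the $\|f\|_2^{2}$, $\|\vec{b}\|^{2}$, and $\|f - \vec{p}\|^{2}$ factors that appear on the right-hand side of the lemma.

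The step I expect to require the most care is absorbing the residual $\sum_j f_j^{2}(b_j - y)^{2}$ produced by the decomposition, since it is not immediately dominated by the stated right-hand side. Closing this likely requires exploiting the orthogonality identity $\sum_j f_j (b_j - \bar b) = 0$ (which lets one trade part of the first term against the second) or using that bin endpoints lie in a bounded interval so $|b_j - y|$ can be coupled back to $\|f - \vec{p}\|$ by routing through $|u| = |(f-\vec{p})^{\top}\vec{b}|$. The subgradient at $u = 0$ is a minor technicality handled by the standard convention that any element of $\partial|u|$ satisfies the same norm bound, and the operator-norm interpretation of $\|\partial g(x)/\partial\theta\|$ is what turns submultiplicativity into the displayed inequality.
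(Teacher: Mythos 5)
Your reduction follows the literal chain rule for an absolute-value loss: with $u=\vec{p}\cdot\vec{b}-f(x)\cdot\vec{b}$ you get $\nabla_\theta|u|=\mathrm{sign}(u)\,\nabla_\theta u$, the sign drops out of the norm, and (correctly computing $(J\vec{b})_j=f_j(b_j-\bar b)$ for $J=\mathrm{diag}(f)-ff^{\top}$) the entire burden lands on the inequality $\|J\vec{b}\|_2\le\sqrt{2}\,\|f(x)\|_2\,\|\vec{b}\|\,\|f(x)-\vec{p}\|$. That inequality is false, so the step you flag as ``requiring the most care'' cannot be closed. Concretely, take $N=2$, $\vec{b}=(0,1)$, $\vec{p}=(\tfrac12,\tfrac12)$ and $f(x)=(\tfrac12+\epsilon,\tfrac12-\epsilon)$ with small $\epsilon>0$: then $u=\epsilon\neq0$, the right-hand side is of order $\epsilon$, but $\|J\vec{b}\|_2\to\sqrt{2}/4$ as $\epsilon\to0$. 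Neither the orthogonality identity $\sum_j f_j(b_j-\bar b)=0$ nor boundedness of the bin endpoints can repair this, because the obstruction is structural: $\mathrm{sign}(u)$ carries no factor of the residual, so a derivative taken your way simply never produces the $\|f(x)-\vec{p}\|$ factor that appears in the statement.

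The paper's proof reaches the stated bound along a different decomposition. It writes the gradient as $\frac{\partial g}{\partial\theta}\,\frac{\partial f}{\partial g}\,\bigl[\frac{\partial\ell_1}{\partial f}\,(\vec{p}\cdot\vec{b}-f(x)\cdot\vec{b})\bigr]$, i.e., it keeps the scalar residual $\vec{p}\cdot\vec{b}-f(x)\cdot\vec{b}$ as a multiplicative factor alongside the sign term; it then bounds $\|\partial f/\partial g\|$ by its Frobenius norm $\sqrt{2}\,\|f(x)\|_2$ (via the same three-term expansion of $\sum_{i,j}|f_j(1_{i=j}-f_i)|^2$ you would need) and applies Cauchy--Schwarz to the residual to extract $\|\vec{b}\|\,\|\vec{p}-f(x)\|$. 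That retained residual is precisely where the $\|f(x)-\vec{p}\|$ factor comes from, and it is precisely what your (more careful) differentiation of $|u|$ discards. To produce the lemma as stated you would have to adopt the paper's treatment of the $\ell_1$ derivative; along your own route the best you can conclude is the weaker bound $\sqrt{2}\,\|f(x)\|_2\,\|\vec{b}\|\,\|\partial g(x)/\partial\theta\|$ without the residual factor. Your counterexample-adjacent observation is worth retaining: it shows the two routes are not reconcilable and pinpoints the differentiation step as the source of the discrepancy.
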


\begin{proof}
    We begin by taking the gradient using the chain rule.
      \begin{align*}
            \frac{\partial}{\partial \theta} \left| \vec{p} \cdot \vec{b} - f(x) \cdot \vec{b} ~ \right|
             & =
            {\frac{\partial g(x)}{\partial \theta}}
            {\frac{\partial f(x)}{\partial g(x)}}
            \left[{\frac{\partial \ell_1}{\partial f(x)}} ~ \left(\vec{p} \cdot \vec{b} - f(x) \cdot \vec{b}\right) \right] \\
            \text{where } {\left(\frac{\partial \ell_1}{\partial f(x)}\right)}_i
             & =
            \begin{cases}
                  +1,\quad {f(x)}_i > {\vec{p}}_i \\
                  -1,\quad {f(x)}_i < {\vec{p}}_i
            \end{cases}
      \end{align*}



      We then take the norm of the gradient.

      \begin{align*}
            \left\| \frac{\partial}{\partial \theta}
            {\left| \vec{p} \cdot \vec{b} - f(x) \cdot \vec{b} ~ \right|} \right\|
             & =
            \left\|
            {\frac{\partial g(x)}{\partial \theta}}
            {\frac{\partial f(x)}{\partial g(x)}}
            {\frac{\partial \ell_1}{\partial f(x)}}
            \left(\vec{p} \cdot \vec{b} - f(x) \cdot \vec{b}\right)
            \right\|
            \\
             & \le
            \left\| {\frac{\partial g(x)}{\partial \theta}} \right\|
            \left\| {\frac{\partial f(x)}{\partial g(x)}} \right\|
            \left\|\vec{p} \cdot \vec{b} - f(x) \cdot \vec{b}\right\|
      \end{align*}

      Now, \(\left\| {\frac{\partial f(x)}{\partial g(x)}} \right\|\) is the norm of the gradient of our softmax output head. We bound this norm.
      \begin{align*}
            \left\| {\frac{\partial f(x)}{\partial g(x)}} \right\|
             & \le
            \left[
                  \sum_i{
                        \sum_j{
                              \left| \frac{\partial f_i(x)}{\partial g_j(x)} \right|^2
                        }}
                  \right]^{\frac{1}{2}}
            \\
            \sum_i{
                  \sum_j{
                        \left| \frac{\partial f_i(x)}{\partial g_j(x)} \right|^2
                  }}
             & =
            \sum_i{
                  \sum_j{
                        \left| f_j(x) \left(1_{i=j} - f_i(x)\right) \right|^2
                  }}
            \\
             & =
            \left[
                  \sum_i{\left|f_i(x) \cdot \left( 1 - f_i(x) \right)\right|^2}
                  \right] \textit{~~~ when ~ \(\left(i=j\right)\)}
            \\
             & + \left[
                  \sum_i{
                        \sum_j{
                              \left|-f_i(x) ~ f_j(x)\right|^2
                        }
                  }
                  \right] \textit{~~~ when ~ \(\left(i = j\right)\) or \(\left(i \neq j\right)\)} 
            \\
             & - \left[
                  \sum_i{\left|-f_i(x) ~ f_i(x)\right|^2}
                  \right] \textit{~~~ when ~\(\left(i=j\right)\)} 
      \end{align*}

      We begin by taking the first term: \(\sum_i{\left|f_i(x) \cdot \left( 1 - f_i(x) \right)\right|^2}\).
      \begin{align*}
            \sum_i{\left|f_i(x) \cdot \left( 1 - f_i(x) \right)\right|^2}
             & = \sum_i{{\left|f_i(x) - f_i^2(x)\right|}^2} 
            \\
             & = \sum_i{{\left(f_i(x) - f_i^2(x)\right)}^2}
            \\
             & = \sum_i{f_i^2(x) ~ (f_i(x) - 1)^2}
            \\
             & \le
            \sum_i{f_i^2(x)}
      \end{align*}

      We then take the second term: \(\sum_i{
                        \sum_j{
                              \left|-f_i(x) ~ f_j(x)\right|^2
                        }
                  }\).
      \begin{align*}
            \sum_i{
                  \sum_j{
                        \left(-f_i(x) ~ f_j(x)\right)^2
                  }
            } & =
            \sum_i{
                  \sum_j{
                        f_i^2(x) ~ f_j^2(x)
                  }
            }
            \\
              & =
            \left(\sum_i{f_i^2(x)}\right) ~ \left(\sum_j{f_j^2(x)}\right)
            \\
              & = {\left(\sum_i{f_i^2(x)}\right)}^2
            \\
              & \le
            \sum_i{f_i^2(x)}
      \end{align*}

      Finally, we take the third term: \(\sum_i{\left|-f_i(x) ~ f_i(x)\right|^2}\).
      \begin{align*}
            \sum_i{\left|-f_i(x) ~ f_i(x)\right|^2}
             & = \sum_i{\left(-f_i(x) ~ f_i(x)\right)^2} \\
             & = \sum_i{\left(f_i^2(x)\right)^2} \\
             & = \sum_i{f_i^4(x)} \\
            \ge 0
      \end{align*}

      We sum all these terms.
      \begin{align*}
            \sum_i{
                  \sum_j{
                        \left| \frac{\partial f_i(x)}{\partial g_j(x)} \right|^2
                  }}
             & =
            \left[
                  \sum_i{\left|f_i(x) \cdot \left( 1 - f_i(x) \right)\right|^2}
                  \right]
            \\
             & + \left[
                  \sum_i{
                        \sum_j{
                              \left|-f_i(x) ~ f_j(x)\right|^2
                        }
                  }
                  \right] 
            \\
             & - \left[
                  \sum_i{\left|-f_i(x) ~ f_i(x)\right|^2}
                  \right] 
            \\
             & \le
            \left[\sum_i{f_i^2(x)}\right] + \left[\sum_i{f_i^2(x)}\right]
            \\
             & = 2 \sum_i{f_i^2(x)}
      \end{align*}

    We solve for the bound of \(\left\| {\frac{\partial f(x)}{\partial g(x)}} \right\|\).
      \begin{align*}
            {\left[\sum_i{
                              \sum_j{
                                    \left| \frac{\partial f_i(x)}{\partial g_j(x)} \right|^2
                              }}\right]}^{\frac{1}{2}}
             & \le {\left[2 \sum_i{f_i^2(x)}\right]}^{\frac{1}{2}}
            \\
             & = \sqrt{2} {\left\|f(x)\right\|}_2
      \end{align*}

      Finally, we evaluate for the final bounds.
      \begin{align*}
            \left\| \frac{\partial}{\partial \theta}
            {\left| \vec{p} \cdot \vec{b} - f(x) \cdot \vec{b} ~ \right|} \right\|
             & \le
            \left\| {\frac{\partial g(x)}{\partial \theta}} \right\|
            \left\| {\frac{\partial f(x)}{\partial g(x)}} \right\|
            \left\|\vec{p} \cdot \vec{b} - f(x) \cdot \vec{b}\right\|
            \\
             & \le
            \sqrt{2} {\left\|f(x)\right\|}_2
            \cdot \left\|\vec{p} \cdot \vec{b} - f(x) \cdot \vec{b}\right\|
            \cdot \left\| {\frac{\partial g(x)}{\partial \theta}} \right\|
            \\
             & =
            \sqrt{2} {\left\|f(x)\right\|}_2
            \cdot \left\| \vec{b} \right\|
            \cdot \left\|\vec{p} - f(x) \right\|
            \cdot \left\| {\frac{\partial g(x)}{\partial \theta}} \right\|
      \end{align*}
\end{proof}

\subsubsection{Gradient Norm Theorem}
\label{appendix:stablegrad:proof}
\begin{theorem}
      Assume that \(g(x)\) is locally \(l\)-Lipschitz continuous w.r.t the model's parameters, \(\theta\):
      \begin{equation}
            \left\| \frac{\partial g(x)}{\partial \theta} \right\| \leq l
      \end{equation}
      Then, the norm of the gradient of \methodacronym{} loss w.r.t. \(\theta\) is bounded by:
      \begin{equation}
            \left\| \nabla_\theta L_{\methodacronym{}}(f(x), y) \right\|
            \leq
            l~\left\|\vec{p} - f(x)\right\|
            \left[
            1+
            \sqrt{2} ~
            {\left\|f(x)\right\|}_2
            \left\|\vec{b}\right\|
            \right]
      \end{equation}
      where $\vec{p}$ is the histogram representation of the target (\ie, $\vec{p} = Y = \Phi(y)$).
\end{theorem}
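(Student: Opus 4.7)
The plan is to decompose the DMoE loss into its two constituent terms, apply the triangle inequality to the gradient norm, and then plug in the two lemmas already established in \cref{sec:loss-grad-norm-bound:distributional} and \cref{sec:loss-grad-norm-bound:distance-based}. Specifically, starting from
\begin{equation*}
      L_{\methodacronym{}}(f(x), y) = \alpha_{HL}\,L_{HL}(f(x), \vec{p}) + \alpha_{DL}\,L_{DL}(f(x), y),
\end{equation*}
where $\vec{p} = \Phi(y)$ is the histogram representation of $y$ and (using $y \approx \vec{p}\cdot\vec{b}$) $L_{DL}(f(x),y) = |\vec{p}\cdot\vec{b} - f(x)\cdot\vec{b}|$, linearity of differentiation and the triangle inequality give
\begin{equation*}
      \left\|\nabla_\theta L_{\methodacronym{}}(f(x), y)\right\|
      \le \alpha_{HL}\left\|\nabla_\theta L_{HL}\right\| + \alpha_{DL}\left\|\nabla_\theta L_{DL}\right\|.
\end{equation*}

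First I would invoke \cref{lemma:loss-grad-norm-bound:distributional} to bound the histogram-loss gradient by $\|\vec{p}-f(x)\| \cdot \|\partial g(x)/\partial\theta\|$. Next I would invoke \cref{lemma:loss-grad-norm-bound:distance-based} to bound the distance-loss gradient by $\sqrt{2}\|f(x)\|_2 \cdot \|\vec{b}\| \cdot \|\vec{p}-f(x)\| \cdot \|\partial g(x)/\partial\theta\|$. Taking $\alpha_{HL}=\alpha_{DL}=1$ (the coefficients are absorbed, consistent with the stated bound containing no $\alpha$'s) and applying the $l$-Lipschitz assumption $\|\partial g(x)/\partial\theta\| \le l$, both terms receive the common factor $l\,\|\vec{p}-f(x)\|$. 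Factoring this out yields
\begin{equation*}
      \left\|\nabla_\theta L_{\methodacronym{}}\right\|
      \le l\,\|\vec{p}-f(x)\|\Bigl[1 + \sqrt{2}\,\|f(x)\|_2\,\|\vec{b}\|\Bigr],
\end{equation*}
which is exactly the claimed bound.

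\textbf{Expected obstacles.} The algebraic reduction is essentially mechanical once the two lemmas are in hand, so the main obstacle is really upstream: the distance-based lemma's bound is phrased against the \emph{discretized} target $\vec{p}\cdot\vec{b}$ rather than the original scalar $y$, whereas $L_{DL}$ in the main text uses $y$ directly. I would handle this either by noting that the quantization error $|y - \vec{p}\cdot\vec{b}|$ does not depend on $\theta$ (so it vanishes upon differentiation, making the two expressions yield identical gradients) or by assuming exact quantization. A secondary subtlety is that $L_{DL}$ is non-differentiable at zero; this can be resolved by working with a subgradient and noting that the sign vector has entries in $\{-1,+1\}$, which is precisely what the proof of \cref{lemma:loss-grad-norm-bound:distance-based} already accommodates. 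Finally, I would note the minor bookkeeping point that the bound as written corresponds to $\alpha_{HL}=\alpha_{DL}=1$; general coefficients simply rescale the two summands and produce an analogous bound with $\alpha_{HL}$ and $\alpha_{DL}\sqrt{2}\|f(x)\|_2\|\vec{b}\|$ inside the brackets.
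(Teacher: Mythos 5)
Your proposal is correct and follows essentially the same route as the paper's own proof: decompose the gradient into the histogram and distance-based terms, apply the triangle inequality, invoke \cref{lemma:loss-grad-norm-bound:distributional} and \cref{lemma:loss-grad-norm-bound:distance-based}, and factor out the common $l\,\|\vec{p}-f(x)\|$. Your remarks on the implicit $\alpha_{HL}=\alpha_{DL}=1$ normalization and on the $y$ versus $\vec{p}\cdot\vec{b}$ substitution are points the paper silently glosses over, so flagging them is a strength rather than a deviation.
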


\begin{proof}
      The gradient of the \methodacronym{} loss w.r.t. \(\theta\) is:
      \begin{equation}
            \nabla_\theta L_{\methodacronym{}}(f(x), y)
            = \frac{\partial}{\partial \theta}\left[ \vec{p} \cdot \log{f(x)} \right] + \frac{\partial}{\partial \theta}  \left| \vec{p} \cdot \vec{b} - f(x) \cdot \vec{b} \right|
      \end{equation}
      where the first term is the gradient of the histogram component of the loss w.r.t. \(\theta\) and the second term is the gradient of the distance-based component of the loss w.r.t. \(\theta\).

      For the histogram term, we utilize \cref{lemma:loss-grad-norm-bound:distributional}, which provides the following bounds:
      \begin{equation*}
            \left\| \frac{\partial}{\partial \theta}
            {\left( \vec{p} \cdot \log{f(x)} \right)} \right\| \le
            \left\| {\frac{\partial g(x)}{\partial \theta}} \right\|
            \cdot \left\| \vec{p} - f(x) \right\|
      \end{equation*}

      For the distance-based term, we utilize \cref{lemma:loss-grad-norm-bound:distance-based}, which provides the following bounds:
      \begin{equation*}
            \left\| \frac{\partial}{\partial \theta}
            {\left| \vec{p} \cdot \vec{b} - f(x) \cdot \vec{b} ~ \right|} \right\| \le
            \left\| {\frac{\partial g(x)}{\partial \theta}} \right\|
            \cdot \left[
                  \sqrt{2}
                  \cdot {\left\|f(x)\right\|}_2
                  \cdot \left\| \vec{b} \right\|
                  \cdot \left\|\vec{p} - f(x) \right\|
                  \right]
      \end{equation*}

      Putting the two terms together, we obtain the following bound:
      \begin{equation}
            \left\| \nabla_\theta L_{\methodacronym{}}(f(x), y) \right\|
            \leq
            l~\left\|\vec{p} - f(x)\right\|
            \left[
            1+
            \sqrt{2} ~
            {\left\|f(x)\right\|}_2
            \left\|\vec{b}\right\|
            \right]
      \end{equation}
\end{proof}

\subsection{\({\left\|f^2(x)\right\|}_2\) Values for Different Distributions}
\Cref{fig:appendix_distributions_l2} shows the \({\left\|f^2(x)\right\|}_2\) values for the normal, categorical, and uniform distributions.

\subsection{Dataset Histograms}
\label{sec:appendix:dataset_histograms}

\subsubsection{OC20 Dataset}

\Cref{fig:appendix:data-hist:oc20} shows KDE plots for the data distributions of all different splits of the OC20 validation dataset.

\begin{figure*}[h]
    \includegraphics[width=1.0\textwidth]{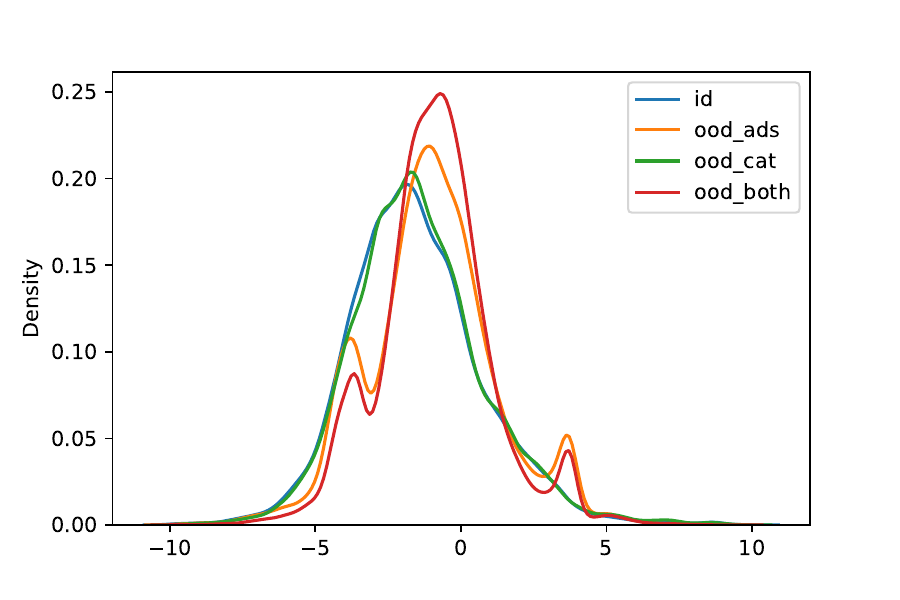}
    \caption{KDE plots for the data distribution of the ID, OOD Ads, OOD Cat, and OOD Both splits of the OC20 validation dataset.}
    \label{fig:appendix:data-hist:oc20}
\end{figure*}

\subsubsection{MD17 Dataset}
\Cref{fig:appendix:data-hist:md17:aspirin,fig:appendix:data-hist:md17:benzene_old,fig:appendix:data-hist:md17:ethanol,fig:appendix:data-hist:md17:malonaldehyde,fig:appendix:data-hist:md17:naphthalene,fig:appendix:data-hist:md17:salicylic,fig:appendix:data-hist:md17:toluene,fig:appendix:data-hist:md17:uracil} show KDE plots for the data distributions of all different molecules in the validation split of the MD17 dataset.

\begin{figure*}[h]
    \includegraphics[width=1.0\textwidth]{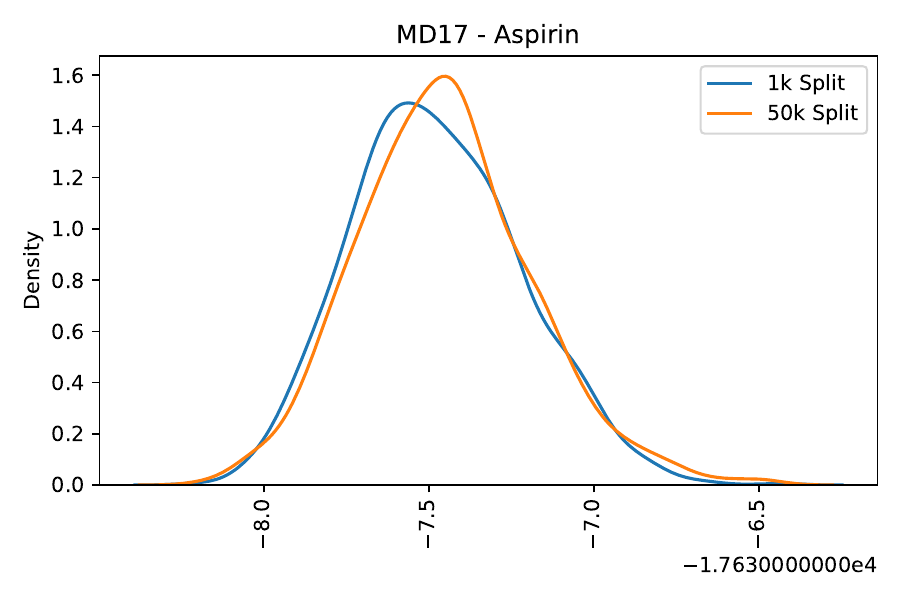}
    \caption{KDE plots for the data distribution of the Aspirin molecule in the MD17 Dataset}
    \label{fig:appendix:data-hist:md17:aspirin}
\end{figure*}
\begin{figure*}[h]
    \includegraphics[width=1.0\textwidth]{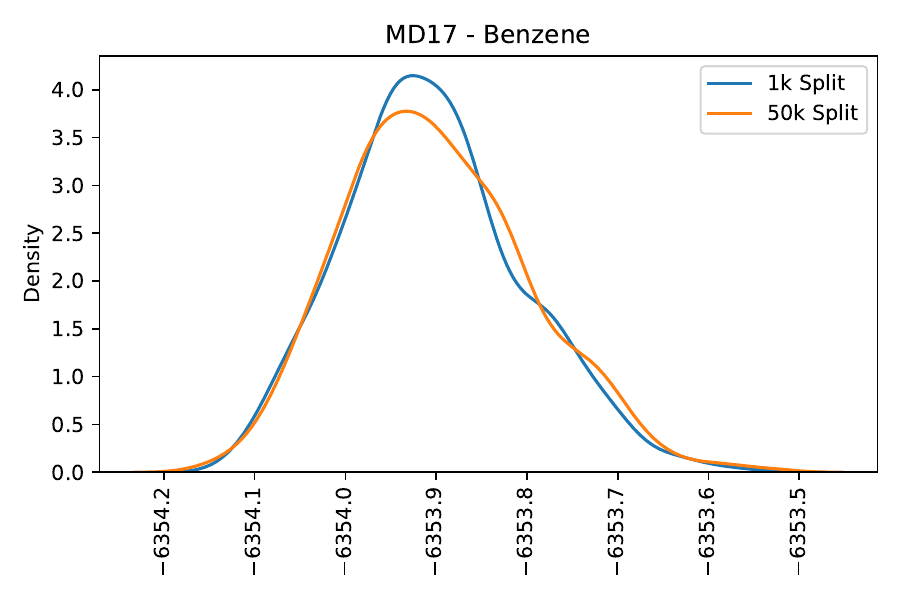}
    \caption{KDE plots for the data distribution of the Benzene molecule in the MD17 Dataset}
    \label{fig:appendix:data-hist:md17:benzene_old}
\end{figure*}
\begin{figure*}[h]
    \includegraphics[width=1.0\textwidth]{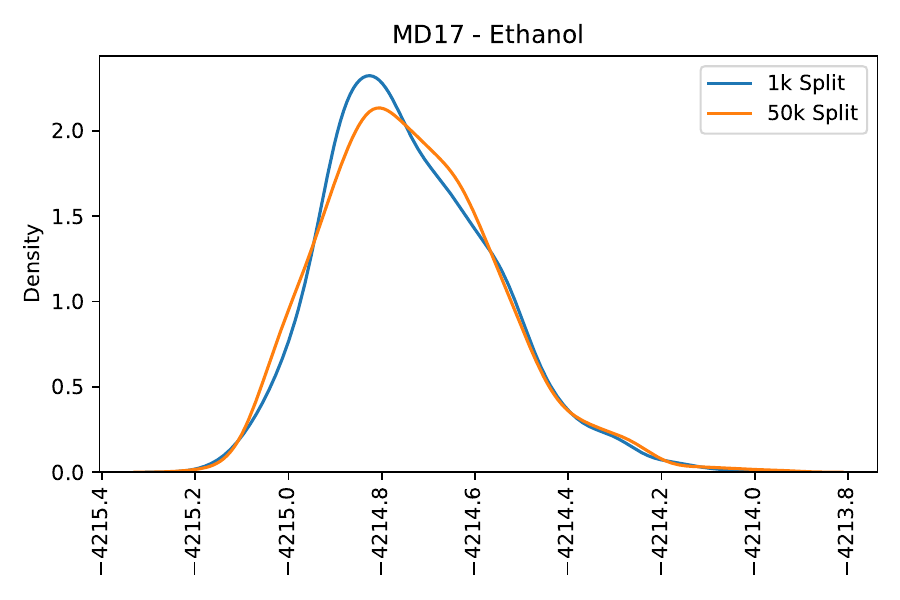}
    \caption{KDE plots for the data distribution of the Ethanol molecule in the MD17 Dataset}
    \label{fig:appendix:data-hist:md17:ethanol}
\end{figure*}
\begin{figure*}[h]
    \includegraphics[width=1.0\textwidth]{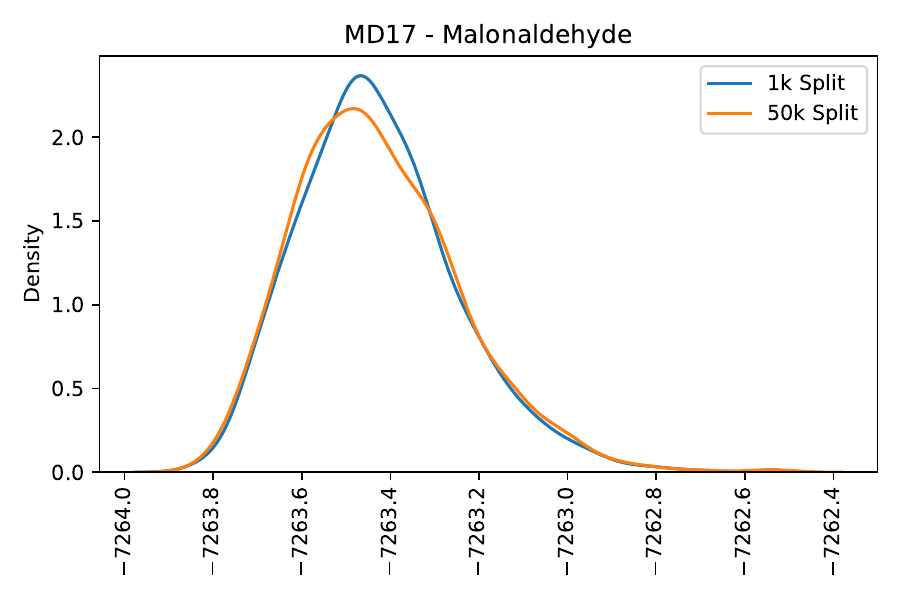}
    \caption{KDE plots for the data distribution of the Malonaldehyde molecule in the MD17 Dataset}
    \label{fig:appendix:data-hist:md17:malonaldehyde}
\end{figure*}
\begin{figure*}[h]
    \includegraphics[width=1.0\textwidth]{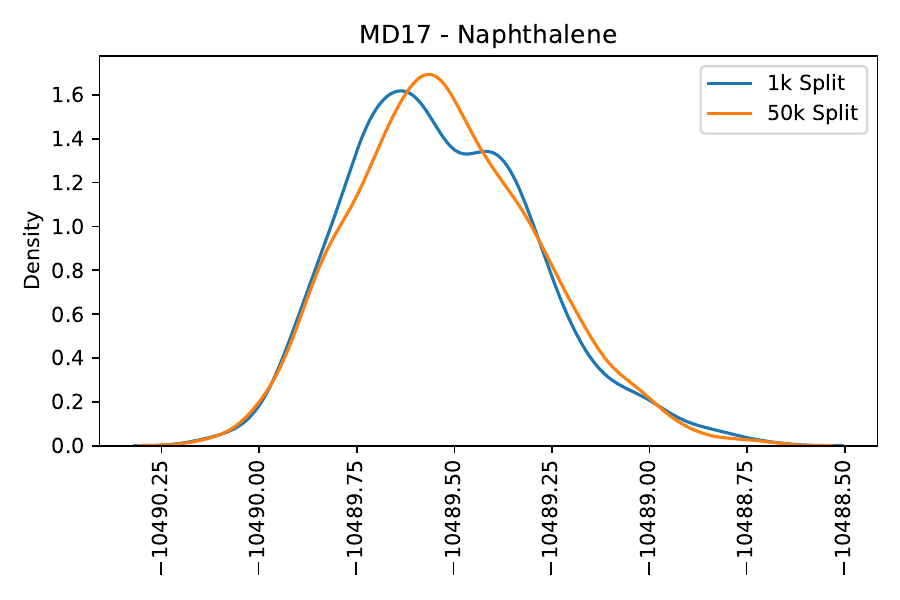}
    \caption{KDE plots for the data distribution of the Naphthalene molecule in the MD17 Dataset}
    \label{fig:appendix:data-hist:md17:naphthalene}
\end{figure*}
\begin{figure*}[h]
    \includegraphics[width=1.0\textwidth]{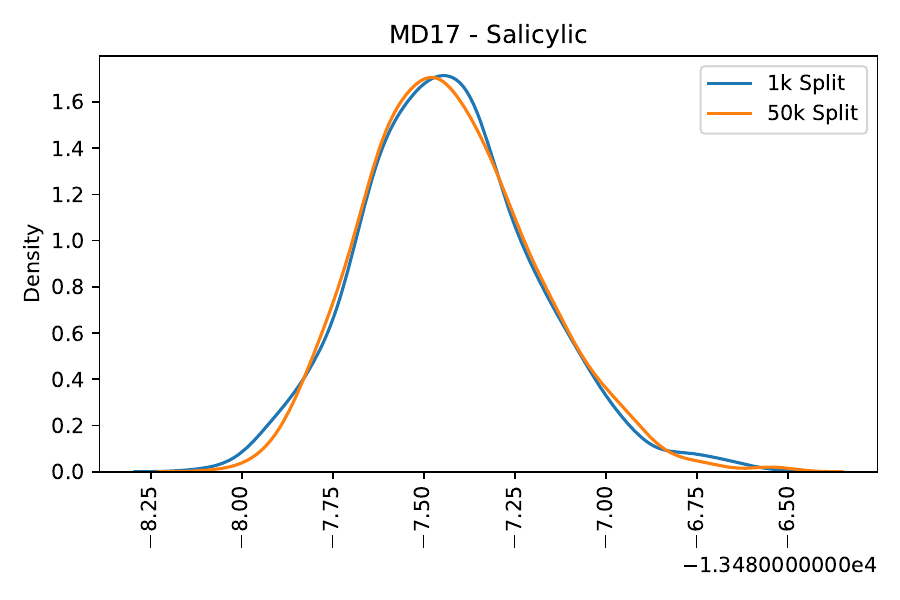}
    \caption{KDE plots for the data distribution of the Salicylic molecule in the MD17 Dataset}
    \label{fig:appendix:data-hist:md17:salicylic}
\end{figure*}
\begin{figure*}[h]
    \includegraphics[width=1.0\textwidth]{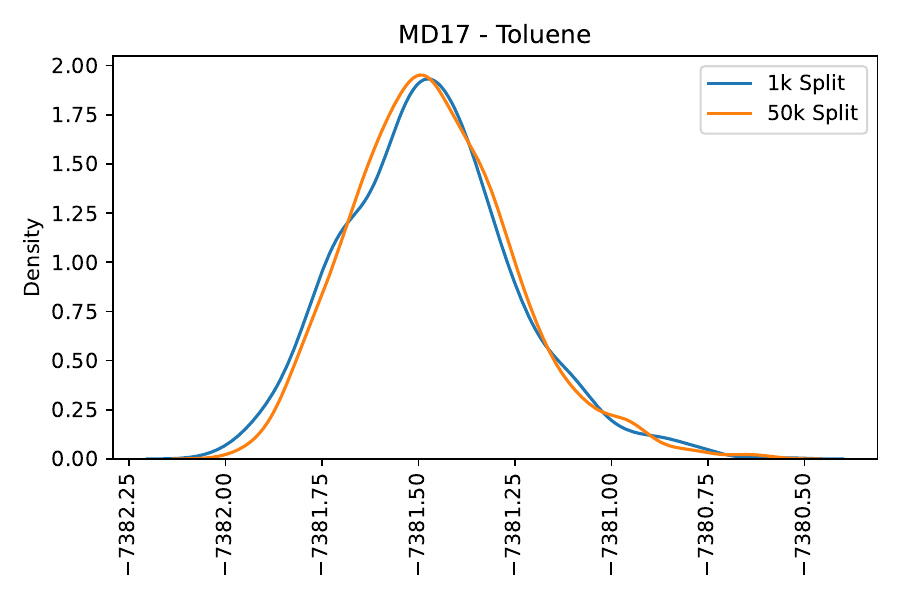}
    \caption{KDE plots for the data distribution of the Toluene molecule in the MD17 Dataset}
    \label{fig:appendix:data-hist:md17:toluene}
\end{figure*}
\begin{figure*}[h]
    \includegraphics[width=1.0\textwidth]{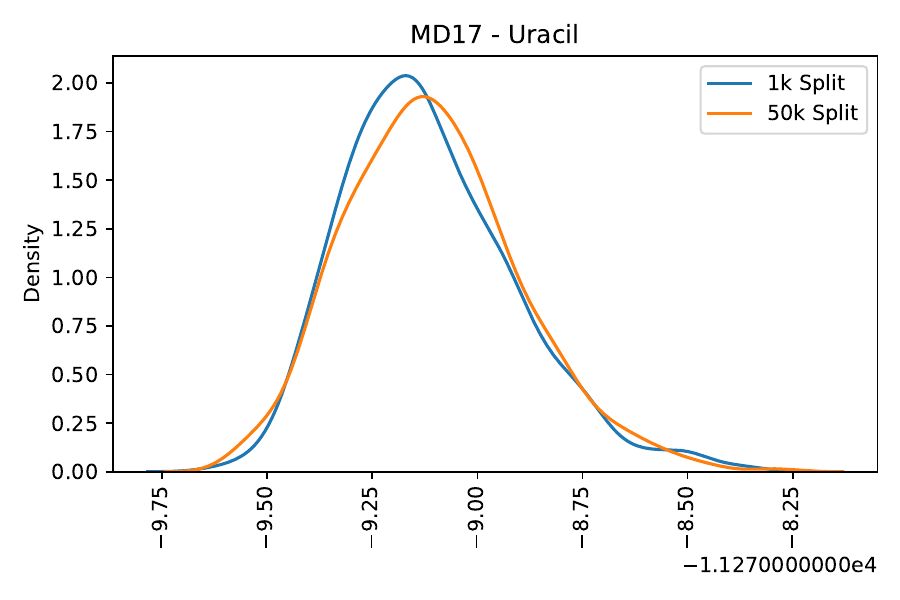}
    \caption{KDE plots for the data distribution of the Uracil molecule in the MD17 Dataset}
    \label{fig:appendix:data-hist:md17:uracil}
\end{figure*}

\subsubsection{QM9 Dataset}
\Cref{fig:appendix:data-hist:qm9:0,fig:appendix:data-hist:qm9:1,fig:appendix:data-hist:qm9:2,fig:appendix:data-hist:qm9:3,fig:appendix:data-hist:qm9:4,fig:appendix:data-hist:qm9:5,fig:appendix:data-hist:qm9:6,fig:appendix:data-hist:qm9:7,fig:appendix:data-hist:qm9:8,fig:appendix:data-hist:qm9:9,fig:appendix:data-hist:qm9:10,fig:appendix:data-hist:qm9:11} show KDE plots for the data distributions of all different targets in the validation split of the QM9 dataset.

\begin{figure*}[h]
    \includegraphics[width=1.0\textwidth]{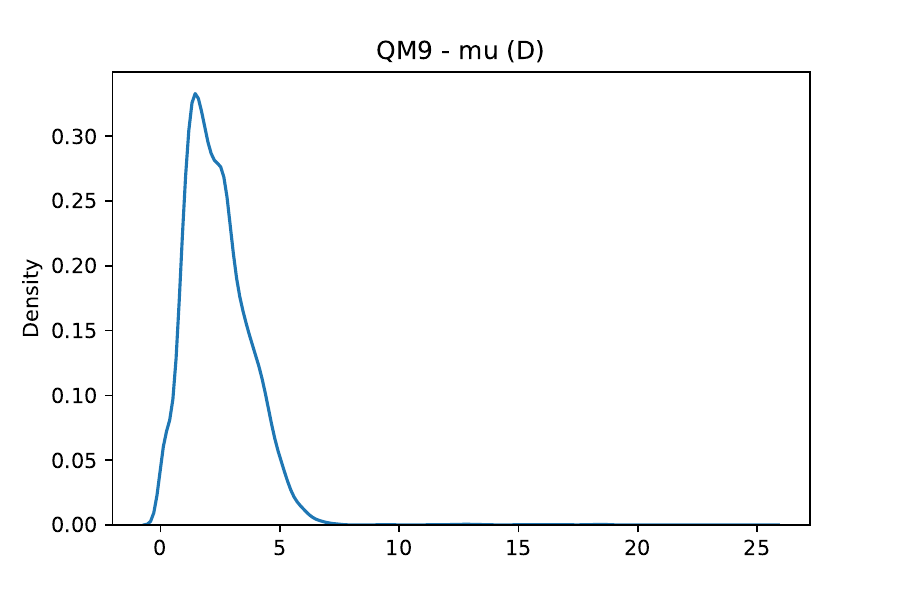}
    \caption{KDE plots for the data distribution of the $mu$ ($D$) target in the QM9 Dataset}
    \label{fig:appendix:data-hist:qm9:0}
\end{figure*}
\begin{figure*}[h]
    \includegraphics[width=1.0\textwidth]{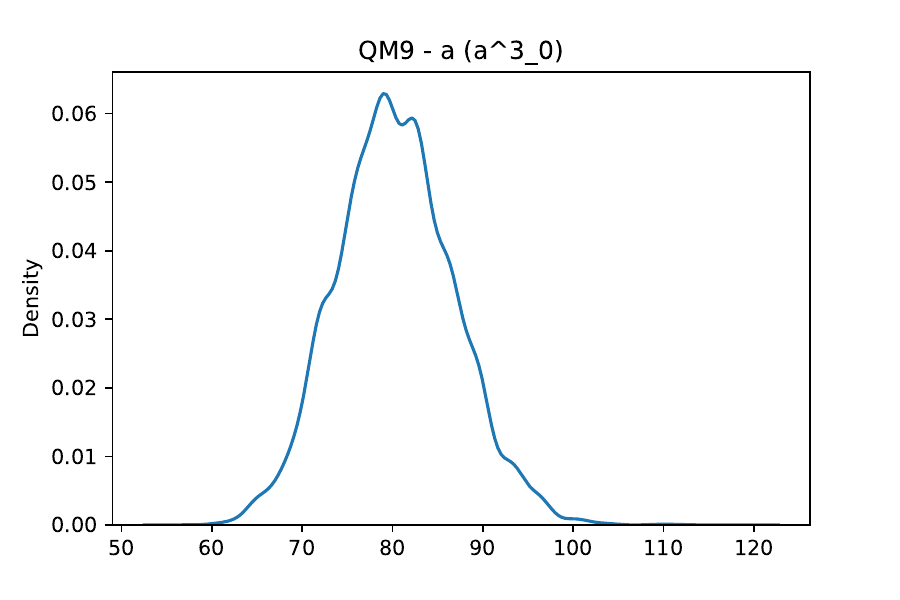}
    \caption{KDE plots for the data distribution of the $a$ ($a^3_0$) target in the QM9 Dataset}
    \label{fig:appendix:data-hist:qm9:1}
\end{figure*}
\begin{figure*}[h]
    \includegraphics[width=1.0\textwidth]{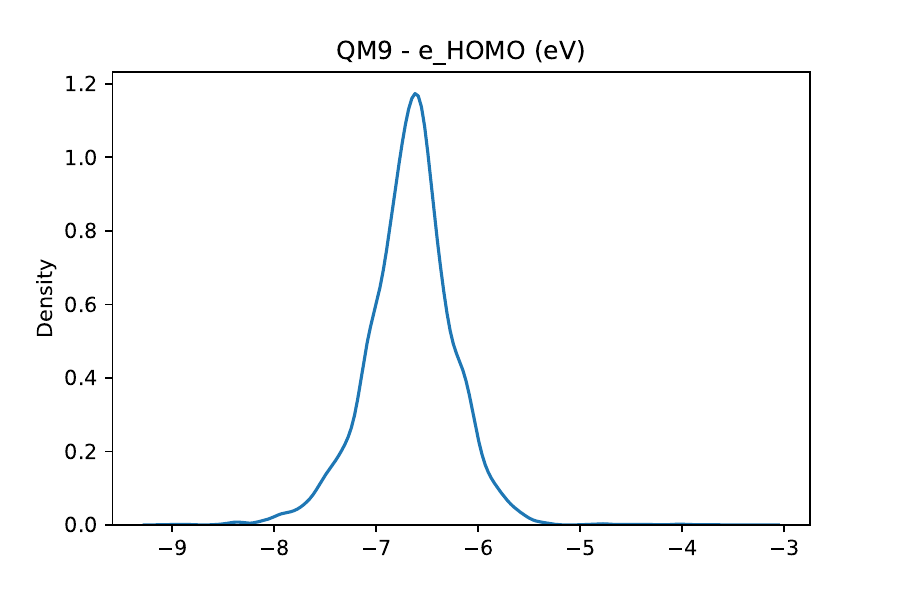}
    \caption{KDE plots for the data distribution of the $e_{HOMO}$ ($eV$) target in the QM9 Dataset}
    \label{fig:appendix:data-hist:qm9:2}
\end{figure*}
\begin{figure*}[h]
    \includegraphics[width=1.0\textwidth]{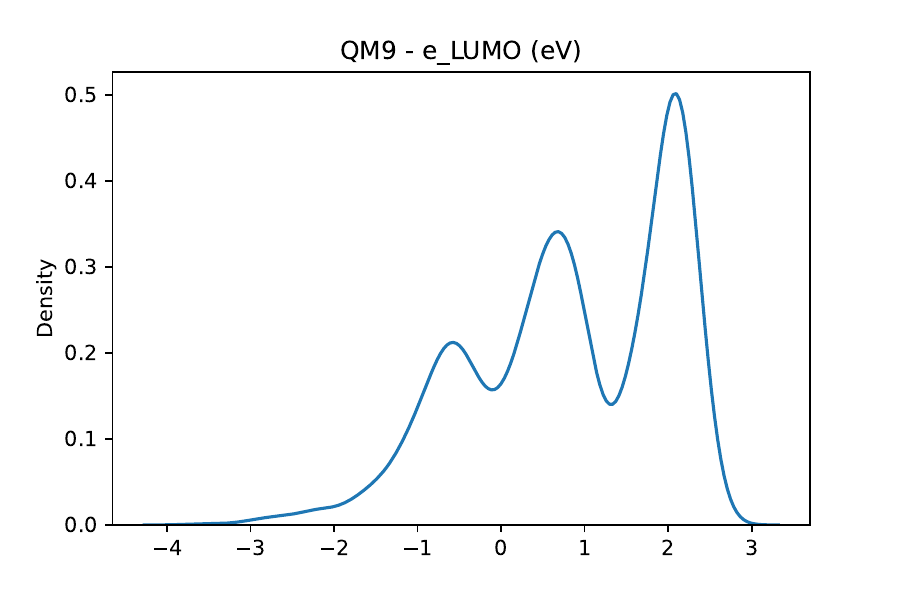}
    \caption{KDE plots for the data distribution of the $e_{LUMO}$ ($eV$) target in the QM9 Dataset}
    \label{fig:appendix:data-hist:qm9:3}
\end{figure*}
\begin{figure*}[h]
    \includegraphics[width=1.0\textwidth]{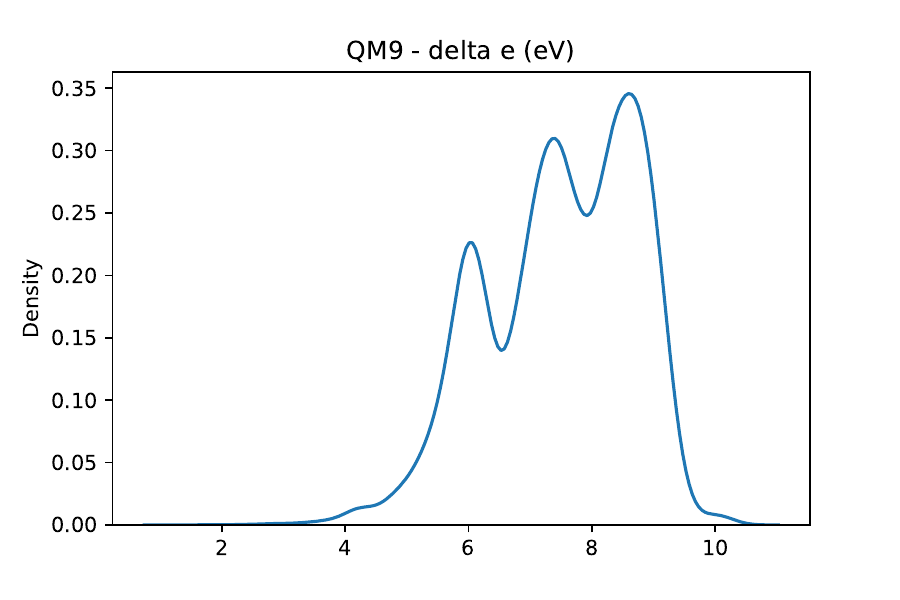}
    \caption{KDE plots for the data distribution of the $\Delta e$ ($eV$) target in the QM9 Dataset}
    \label{fig:appendix:data-hist:qm9:4}
\end{figure*}
\begin{figure*}[h]
    \includegraphics[width=1.0\textwidth]{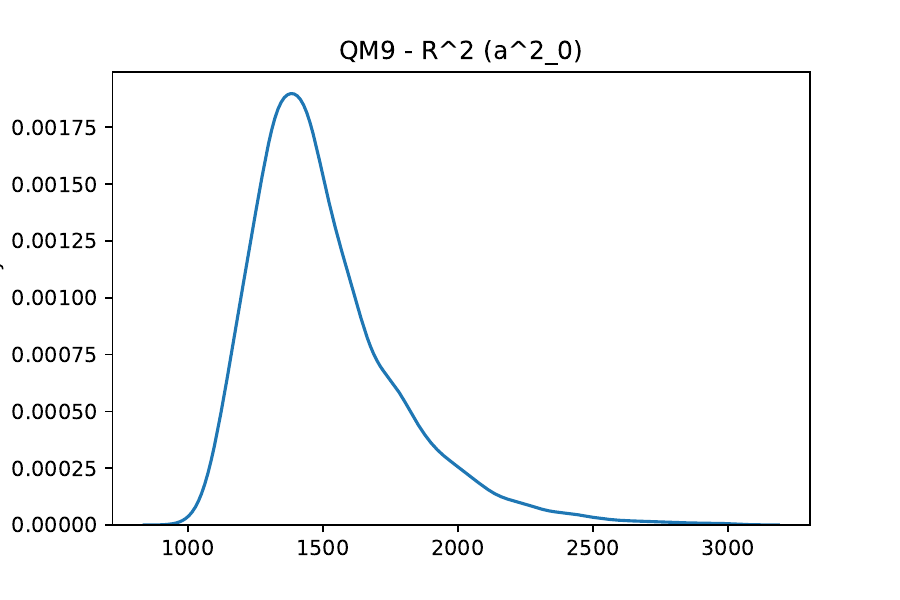}
    \caption{KDE plots for the data distribution of the $R^2$ ($a^2_0$) target in the QM9 Dataset}
    \label{fig:appendix:data-hist:qm9:5}
\end{figure*}
\begin{figure*}[h]
    \includegraphics[width=1.0\textwidth]{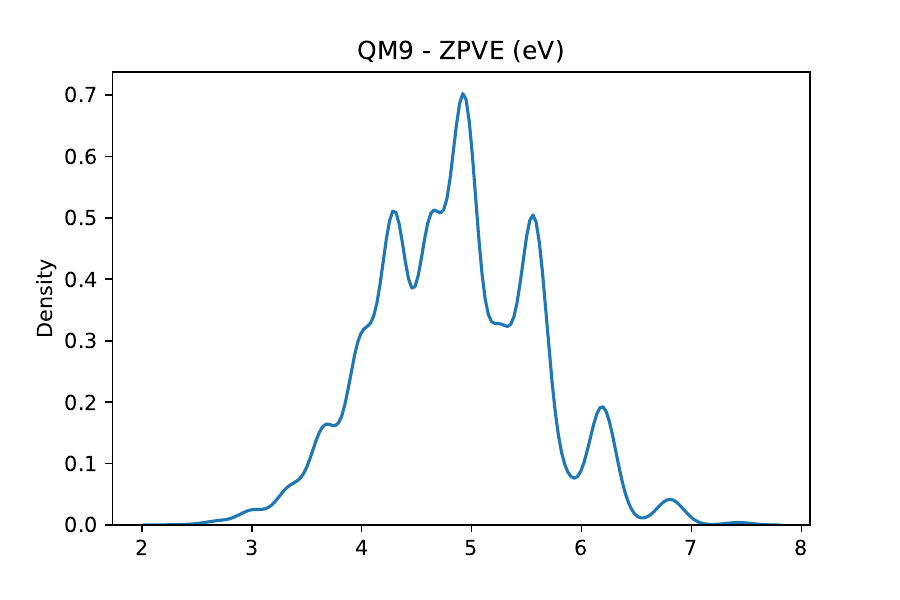}
    \caption{KDE plots for the data distribution of the $ZPVE$ ($eV$) target in the QM9 Dataset}
    \label{fig:appendix:data-hist:qm9:6}
\end{figure*}
\begin{figure*}[h]
    \includegraphics[width=1.0\textwidth]{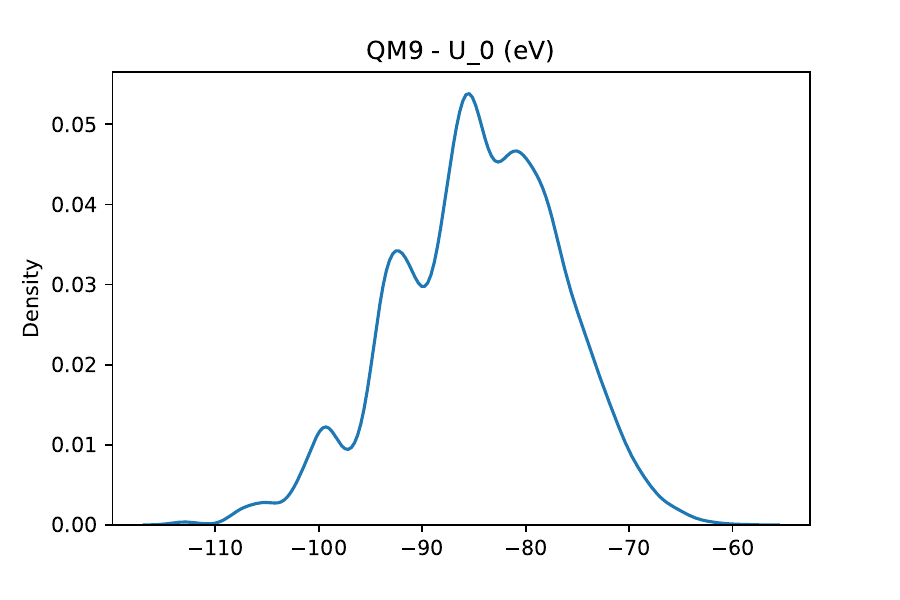}
    \caption{KDE plots for the data distribution of the $U_0$ ($eV$) target in the QM9 Dataset}
    \label{fig:appendix:data-hist:qm9:7}
\end{figure*}
\begin{figure*}[h]
    \includegraphics[width=1.0\textwidth]{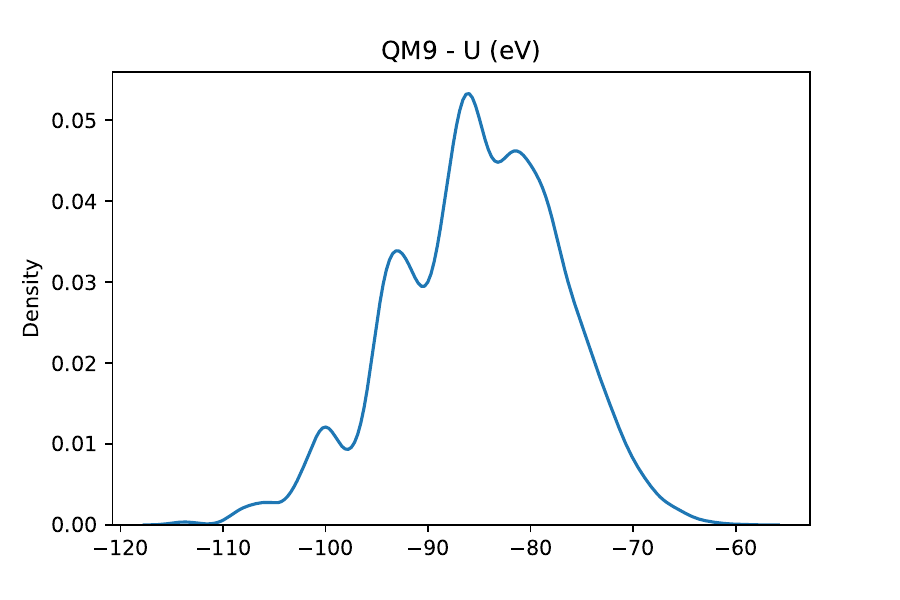}
    \caption{KDE plots for the data distribution of the $U$ ($eV$) target in the QM9 Dataset}
    \label{fig:appendix:data-hist:qm9:8}
\end{figure*}
\begin{figure*}[h]
    \includegraphics[width=1.0\textwidth]{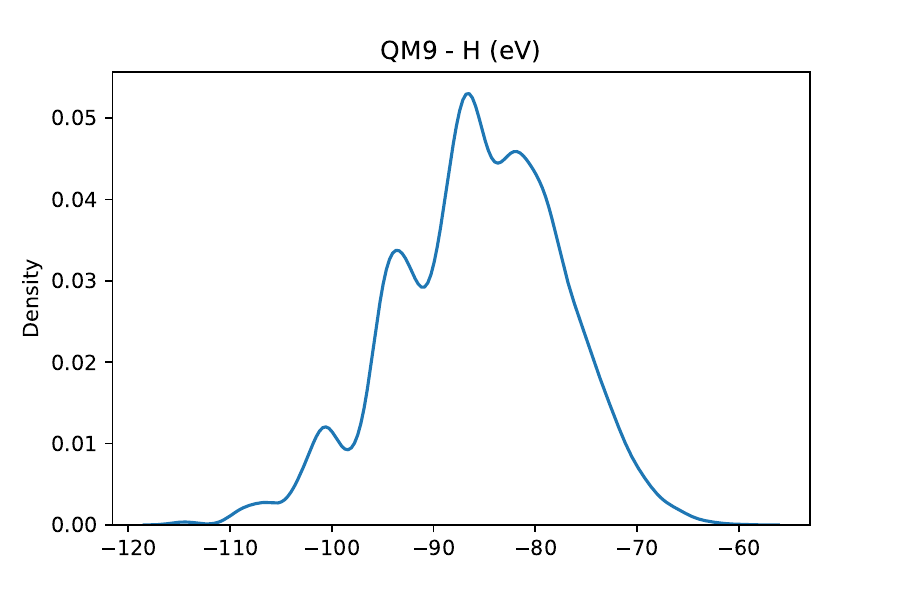}
    \caption{KDE plots for the data distribution of the $H$ ($eV$) target in the QM9 Dataset}
    \label{fig:appendix:data-hist:qm9:9}
\end{figure*}
\begin{figure*}[h]
    \includegraphics[width=1.0\textwidth]{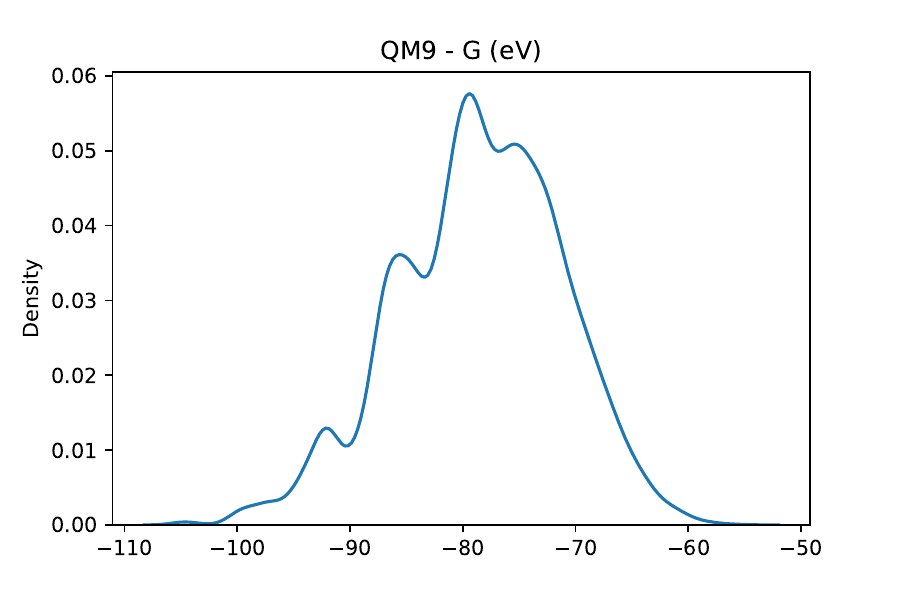}
    \caption{KDE plots for the data distribution of the $G$ ($eV$) target in the QM9 Dataset}
    \label{fig:appendix:data-hist:qm9:10}
\end{figure*}
\begin{figure*}[h]
    \includegraphics[width=1.0\textwidth]{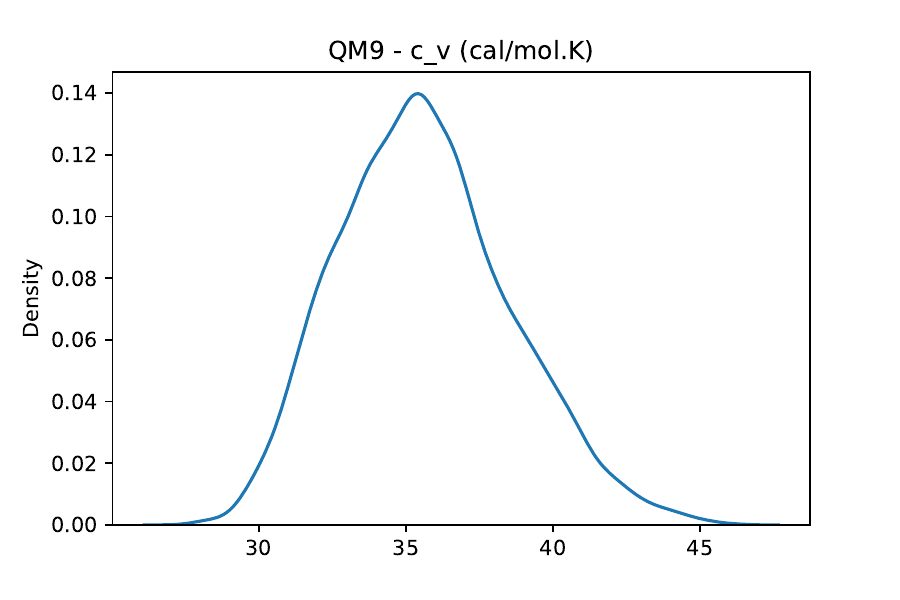}
    \caption{KDE plots for the data distribution of the $c_v$ (cal/mol.K)) target in the QM9 Dataset}
    \label{fig:appendix:data-hist:qm9:11}
\end{figure*}

\subsection{Additional Ablations}

\subsubsection{Induced Distribution}
\label{sec:induced-distribution-ablation}
\Cref{tab:appendix:induced-dist} shows ID validation metrics for training the GemNet+Pos+DMoE model, with different induced distributions, on the OC20 dataset. Specifically, we experiment with Normal, Laplace, Categorical (\ie, one-hot), and K-Categorical (\ie normalized k-hot) distributions. For the normal distribution, we experiment with different values for $\sigma$.

Our results show that the exact details of the choice of induce distribution does not seem to have a huge impact on the accuracy results and much of the differences in performance fall within the margin of error. However, there are some very clear trends that we will analyze below. All these trends are independent of our choice of bin distribution.

The normal distribution (originally suggested by \citetappendix{imani2018improving}) is the optimal choice for the induced distribution. 
The categorical distribution (referred to as the dirac delta distribution in \citetappendix{imani2018improving}) demonstrates the worst performance. 
For the normal distribution, lower $\sigma$ values yield, on average, higher EWT scores but also higher MAE results. Overall, our empirical results demonstrate that using a normal induced distribution with $\sigma=1\cdot\mathbb{E}{\left[w\right]}$ provides a good balance between MAE and EWT results.

\begin{table}[!htp]\centering
\caption{ID Validation metrics for GemNet+Pos+DMoE models trained with different induced distributions on the OC20 dataset}\label{tab:appendix:induced-dist}
\scriptsize
\begin{tabular}{lrrrrrr}\toprule
& &\multicolumn{2}{c}{Normal Bin Distribution} &\multicolumn{2}{c}{Uniform Bin Distribution} \\\cmidrule{3-6}
& &Energy MAE &EWT &Energy MAE &EWT \\\midrule
\multirow{6}{*}{Normal} &$\sigma=0.5 \cdot \mathbb{E}{\left[w\right]}$ &0.458 &8.0\% &0.459 &\textbf{8.0\%} \\
&$\sigma=1 \cdot \mathbb{E}{\left[w\right]}$ &0.458 &\textbf{8.1\%} &0.457 &7.9\% \\
&$\sigma=1.5 \cdot \mathbb{E}{\left[w\right]}$ &\textbf{0.453} &7.9\% &0.459 &7.9\% \\
&$\sigma=2 \cdot \mathbb{E}{\left[w\right]}$ &0.457 &7.8\% &\textbf{0.456} &\textbf{8.0\%} \\
&$\sigma=5 \cdot \mathbb{E}{\left[w\right]}$ &0.455 &7.3\% &0.458 &7.9\% \\
&$\sigma=10 \cdot \mathbb{E}{\left[w\right]}$ &0.462 &7.0\% &\textbf{0.456} &7.6\% \\
Laplace &$b=1 \cdot \mathbb{E}{\left[w\right]}$ &0.457 &7.7\% &0.462 &7.9\% \\
K-Categorical &$k=3$ &0.459 &7.8\% &0.459 &7.8\% \\
Categorical & &0.460 &7.3\% &0.463 &7.6\% \\
\bottomrule
\end{tabular}
\end{table}

\subsubsection{Comparison to Robust Regression Losses}
\label{sec:robust-regression-losses}
\Cref{tab:appendix:robust-losses} shows ID validation metrics for training the GemNet+Pos model on the OC20 dataset, using different loss functions. Specifically, we compare our \methodacronym{} (using normal bin distributions) loss with L1 loss, L2 loss, and Smooth L1 Loss~\citeappendix{girshick2015fast} (with \(\beta=1.0\)).
These results show that the performance boost that \methodacronym{} provides is seemingly different from the effect observed from robust regression losses.

\begin{table}[!htp]\centering
    \caption{ID Validation metrics for GemNet+Pos models trained with different loss functions on the OC20 dataset}\label{tab:appendix:robust-losses}
    \scriptsize
    \begin{tabular}{lrrr}\toprule
                       & Energy         & EwT            \\\midrule
        L1 Loss        & 0.476          & 5.7\%          \\
        L2 Loss        & 0.502          & 4.3\%          \\
        Smooth L1 Loss & 0.505          & 4.4\%          \\
        DMoE           & \textbf{0.450} & \textbf{7.8\%} \\
        \bottomrule
    \end{tabular}
\end{table}

\subsection{Optimized Multi-Histogram Implementation}
\label{appendix:optimized-multihist}
In classic PyTorch code, creating a multi-histogram output head is slow because heads get evaluated sequentially. However, we use FuncTorch~\citeappendix{functorch2021} to run all these heads in parallel. An example is shown below.

Let us use the output heads and input shown below.
\begin{lstlisting}
num_histograms = 10

output_heads = [
    nn.Sequential(
        nn.Linear(input_dim, input_dim // 2),
        nn.ReLU(),
        nn.Linear(input_dim // 2, num_histograms),
    ),
    for _ in range(num_models)
]

input = torch.randn(1, input_dim)
\end{lstlisting}

In classic PyTorch, we would use the following sequential code to evaluate all heads:
\begin{lstlisting}
predictions = [output_head(input) for output_head in output_heads]
\end{lstlisting}

However, with the parallel optimization, we use the following code:
\begin{lstlisting}
fmodel, params, buffers = (
    functorch.combine_state_for_ensemble(output_heads)
)
evaluate_all = functorch.vmap(fmodel, in_dims=(0, 0, None))
predictions = evaluate_all(params, buffers, input)
\end{lstlisting}

\subsection{Model Hyperparameters}
In the section below, we enumerate all chosen hyperparameters across all our experiments. 
Comma-separated hyperparameter values indicate that multiple values were evaluated in our experiments. 
In this case, an exhaustive grid search is conducted across all possible hyperparameters, and the hyperparameters that produce the best results (\ie, the lowest validation MAE scores) are selected.

For the histogram induced distribution hyperparameter, \(w\) refers to the bin width vector; thus, \(E\left[w\right]\) refers to the average bin width in the histogram.

\subsubsection{OC20 Models}
\Cref{tab:hparams:oc20:schnet,tab:hparams:oc20:dimenet,tab:hparams:oc20:gemnet,tab:hparams:oc20:graphormer,tab:hparams:oc20:gemnet_star} show the hyperperameters for the SchNet, DimeNet++, GemNet-dT, Graphormer, and GemNet* models, respectively, as shown on \cref{tab:oc20-val,tab:oc20-test}.

\begin{table}[!htp]\centering
\caption{SchNet Hyperparameters for OC20 Dataset}\label{tab:hparams:oc20:schnet}
\scriptsize
\begin{tabular}{lrr}\toprule
Hyperparameter &Value \\\midrule
Number of Hidden Channels &384 \\
Number of Filters to Use &128 \\
Number of Interaction Blocks &4 \\
Number of Gaussians &100 \\
Cutoff Distance for Interatomic Interactions &6 \\
Optimizer &Adam \\
Learning Rate &0.001 \\
Batch Size &32 \\
Training Time &100 epochs with early stopping \\
Number of Histograms &1 \\
Number of Histogram Bins &1024 \\
Histogram Bin Distribution &Uniform \\
Histogram Induced Distribution &Normal($\sigma = \mathbb{E}\left[w\right]$) \\
\bottomrule
\end{tabular}
\end{table}
\begin{table}[!htp]\centering
\caption{DimeNet++ Hyperparameters for OC20 Dataset}\label{tab:hparams:oc20:dimenet}
\scriptsize
\begin{tabular}{lrr}\toprule
Hyperparameter &Value \\\midrule
Hidden Embedding Size &256 \\
Embedding Size Used for Atoms in the Output Block &192 \\
Number of Building Blocks &3 \\
Cutoff Distance for Interatomic Interactions &6 \\
Number of Radial Basis Functions &6 \\
Number of Spherical Harmonics &7 \\
Number of Residual Layers in the Interaction Blocks Before the Skip Connection &1 \\
Number of Residual Layers in the Interaction Blocks After the Skip Connection &2 \\
Number of Linear Layers for the Output Blocks &3 \\
Optimizer &Adam \\
Learning Rate &0.001 \\
Batch Size &12 \\
Training Time &100 epochs with early stopping \\
Number of Histograms &1 \\
Number of Histogram Bins &1024 \\
Histogram Induced Distribution &Normal($\sigma = \mathbb{E}\left[w\right]$) \\
\bottomrule
\end{tabular}
\end{table}
\begin{table}[!htp]\centering
\caption{GemNet-dT Hyperparameters for OC20 Dataset}\label{tab:hparams:oc20:gemnet}
\scriptsize
\begin{tabular}{lrr}\toprule
Hyperparameter &Value \\\midrule
Number of Radial Basis Functions &128 \\
Number of Spherical Harmonics &7 \\
Number of Building Blocks &3 \\
Number of Output Blocks &4 \\
Embedding Size of the Atoms &512 \\
Embedding Size of the Edges &512 \\
Embedding Size in the Triplet Message Passing Block &64 \\
Embedding Size of the Radial Basis Transformation &16 \\
Embedding Size of the Circular Basis Transformation &16 \\
Edge Embedding Size After the Bilinear Layer &64 \\
Cutoff Distance for Interatomic Interactions &6 \\
Number of Residual Blocks After the Concatenation &1 \\
Number of Residual Blocks in the Atom Embedding Blocks &3 \\
Number of Residual Layers Before Skip &1 \\
Number of Residual Layers After Skip &2 \\
Number of Linear Layers for the Output Blocks &3 \\
Cutoff Distance for Interatomic Interactions &6 \\
Max Number of Neighbors for Interatomic Interactions &50 \\
Auxiliary Position Loss Coefficient &16 \\
Energy Loss Coefficient &1 \\
Optimizer &AdamW \\
Learning Rate &0.0005 \\
Learning Rate Scheduler &ReduceLROnPlateau \\
Max Gradient Norm (i.e., Clip Gradient Norm Value) &10 \\
Batch Size &4, 8, 16 \\
Training Time &100 epochs with early stopping \\
Number of Histograms &1, 8, 32, 256 \\
Number of Histogram Bins &256, 1024, 2048, 4096 \\
Histogram Bin Distribution &Uniform, Normal \\
Histogram Induced Distribution &Normal($\sigma = \mathbb{E}\left[w\right]$), Laplace($b = \mathbb{E}\left[w\right]$), Categorical, K-Categorical \\
\bottomrule
\end{tabular}
\end{table}
\begin{table}[!htp]\centering
\caption{Graphormer Hyperparameters for OC20 Dataset}\label{tab:hparams:oc20:graphormer}
\scriptsize
\begin{tabular}{lrr}\toprule
Hyperparameter &Value \\\midrule
Number of Radial Basis Functions &128 \\
Number of Building Blocks &12 \\
Number of Building Block Repeats &4 \\
Embedding Size &768 \\
FFN Embedding Size &768 \\
Number of Attention Heads &48 \\
Input Dropout Rate &0 \\
Encoder Layer Dropout Rate &0.1 \\
Attention Dropout Rate &0.1 \\
Activation Dropout Rate &0 \\
Cutoff Distance for Interatomic Interactions &12 \\
Auxiliary Position Loss Coefficient &16 \\
Energy Loss Coefficient &1 \\
Optimizer &Adam \\
Learning Rate &0.0003 \\
Learning Rate Scheduler &Polynomial decay to 0 over 1 million train steps \\
Max Gradient Norm (i.e., Clip Gradient Norm Value) &5 \\
Batch Size &2, 4, 6 \\
Training Time &100 epochs with early stopping \\
Number of Histograms &1, 2, 4 \\
Number of Histogram Bins &256, 1024, 2048, 4096 \\
Histogram Bin Distribution &Uniform, Normal \\
Histogram Induced Distribution &Normal($\sigma = \mathbb{E}\left[w\right]$), Laplace($b = \mathbb{E}\left[w\right]$), Categorical, K-Categorical \\
\bottomrule
\end{tabular}
\end{table}
\begin{table}[!htp]\centering
\caption{GemNet* Hyperparameters for OC20 Dataset}\label{tab:hparams:oc20:gemnet_star}
\scriptsize
\begin{tabular}{lrr}\toprule
Hyperparameter &Value \\\midrule
Number of Radial Basis Functions &128 \\
Number of Spherical Harmonics &7 \\
Number of Building Blocks &12 \\
Number of Building Block Repeats &3 \\
Number of Output Blocks &1 \\
Embedding Size of the Atoms &512 \\
Embedding Size of the Edges &512 \\
Embedding Size in the Triplet Message Passing Block &128 \\
Embedding Size of the Radial Basis Transformation &128 \\
Embedding Size of the Circular Basis Transformation &16 \\
Edge Embedding Size After the Bilinear Layer &128 \\
Cutoff Distance for Interatomic Interactions &12 \\
Number of Residual Blocks After the Concatenation &1 \\
Number of Residual Blocks in the Atom Embedding Blocks &3 \\
Number of Residual Layers Before Skip &1 \\
Number of Residual Layers After Skip &2 \\
Number of Linear Layers for the Output Blocks &3 \\
Cutoff Distance for Interatomic Interactions &6 \\
Max Number of Neighbors for Interatomic Interactions &50 \\
Auxiliary Position Loss Coefficient &16 \\
Energy Loss Coefficient &1 \\
Optimizer &AdamW \\
Learning Rate &0.0005 \\
Learning Rate Scheduler &ReduceLROnPlateau \\
Max Gradient Norm (i.e., Clip Gradient Norm Value) &10 \\
Batch Size &1, 2, 4 \\
Training Time &100 epochs with early stopping \\
Number of Histograms &1, 8, 32, 256 \\
Number of Histogram Bins &256, 1024, 2048, 4096 \\
Histogram Bin Distribution &Uniform, Normal \\
Histogram Induced Distribution &Normal($\sigma = \mathbb{E}\left[w\right]$), Laplace($b = \mathbb{E}\left[w\right]$), Categorical, K-Categorical \\
\bottomrule
\end{tabular}
\end{table}

\subsubsection{MD17 Models}
\Cref{tab:hparams:md17:gemnet,tab:hparams:md17:schnet} show the hyperperameters for the GemNet-dT and SchNet models, respectively, as shown on \cref{tab:md17}.

\begin{table}[!htp]\centering
\caption{GemNet-dT Hyperparameters for MD17 Dataset}\label{tab:hparams:md17:gemnet}
\scriptsize
\begin{tabular}{lrr}\toprule
Hyperparameter &Value \\\midrule
Number of Radial Basis Functions &128 \\
Number of Spherical Harmonics &7 \\
Number of Building Blocks &3 \\
Number of Output Blocks &4 \\
Embedding Size of the Atoms &512 \\
Embedding Size of the Edges &512 \\
Embedding Size in the Triplet Message Passing Block &64 \\
Embedding Size of the Radial Basis Transformation &16 \\
Embedding Size of the Circular Basis Transformation &16 \\
Edge Embedding Size After the Bilinear Layer &64 \\
Cutoff Distance for Interatomic Interactions &6 \\
Number of Residual Blocks After the Concatenation &1 \\
Number of Residual Blocks in the Atom Embedding Blocks &3 \\
Number of Residual Layers Before Skip &1 \\
Number of Residual Layers After Skip &2 \\
Number of Linear Layers for the Output Blocks &3 \\
Cutoff Distance for Interatomic Interactions &6 \\
Max Number of Neighbors for Interatomic Interactions &50 \\
Auxiliary Position Loss Coefficient &16 \\
Energy Loss Coefficient &1 \\
Optimizer &AdamW \\
Learning Rate &0.001, 0.0005, 0.0001 \\
Learning Rate Scheduler &ReduceLROnPlateau \\
Max Gradient Norm (i.e., Clip Gradient Norm Value) &10 \\
Batch Size &4 \\
Training Time &100 epochs with early stopping \\
Number of Histograms X Number of Histogram Bins &128x1024, 256x128, 128x64, 2x64 \\
Histogram Bin Distribution &Uniform, Normal \\
Histogram Induced Distribution &Normal($\sigma = \mathbb{E}\left[w\right]$) \\
\bottomrule
\end{tabular}
\end{table}
\begin{table}[!htp]\centering
\caption{SchNet Hyperparameters for MD17 Dataset}\label{tab:hparams:md17:schnet}
\scriptsize
\begin{tabular}{lrr}\toprule
Hyperparameter &Value \\\midrule
Number of Hidden Channels &384 \\
Number of Filters to Use &128 \\
Number of Interaction Blocks &4 \\
Number of Gaussians &100 \\
Cutoff Distance for Interatomic Interactions &6 \\
Optimizer &Adam \\
Learning Rate &0.001, 0.0001 \\
Batch Size &4 \\
Training Time &100 epochs with early stopping \\
Number of Histograms &256 \\
Number of Histogram Bins &256 \\
Histogram Bin Distribution &Normal \\
Histogram Induced Distribution &Normal($\sigma = \mathbb{E}\left[w\right]$) \\
\bottomrule
\end{tabular}
\end{table}

\subsubsection{QM9 Models}
\Cref{tab:hparams:qm9:mxmnet} shows the hyperparameters for the MXMNet model, as shown on \cref{tab:qm9}.

\begin{table}[!htp]\centering
\caption{MXMNet Hyperparameters for QM9 Dataset}\label{tab:hparams:qm9:mxmnet}
\scriptsize
\begin{tabular}{lrr}\toprule
Hyperparameter &Value \\\midrule
Hidden Dimension Size &128 \\
Number of Hidden Layers &6 \\
Cutoff Distance for Interatomic Interactions for Global Block &5 \\
Optimizer &Adam \\
Learning Rate &0.0001 \\
Batch Size &128 \\
Training Time &100 epochs with early stopping \\
Number of Histograms &32 \\
Number of Histogram Bins &2048 \\
Histogram Bin Distribution &Normal \\
Histogram Induced Distribution &Normal($\sigma = \mathbb{E}\left[w\right]$) \\
\bottomrule
\end{tabular}
\end{table}

\clearpage
\bibliographystyleappendix{plainnat}
\bibliographyappendix{referenceappendix}

\end{document}